\documentclass{article} 
\usepackage{iclr2026_conference,times}


\usepackage{amsmath,amsfonts,bm}









\def\eqref#1{equation~\ref{#1}}









\def\1{\bm{1}}










\DeclareMathAlphabet{\mathsfit}{\encodingdefault}{\sfdefault}{m}{sl}
\SetMathAlphabet{\mathsfit}{bold}{\encodingdefault}{\sfdefault}{bx}{n}













\usepackage{hyperref}
\usepackage{makecell}
\usepackage{url}
\usepackage{amsmath}
\usepackage{amssymb}
\usepackage{amsthm}
\usepackage{amsfonts}
\usepackage{bbm}
\usepackage{amssymb}
\usepackage{wrapfig}
\usepackage{caption}
\usepackage{graphicx}
\usepackage{subfigure}
\usepackage{booktabs}
\usepackage{adjustbox}
\usepackage{subcaption}
\usepackage{algorithmic}
\usepackage[ruled]{algorithm2e}
\usepackage{tcolorbox}

\newtheorem{lemma}{Lemma}

\newtheorem{theorem}{Theorem}

\newtheorem{definition}{Definition}

\title{Enhancing Generative Auto-bidding with \\ Offline Reward Evaluation and Policy Search}




\author{%
  Zhiyu Mou$^{1}$\thanks{Equal contribution; \textsuperscript{$\dag$}Work was done during an internship at Alibaba Group;
  \textsuperscript{$\S$}Corresponding authors.} \quad 
  Yiqin Lv$^{1,2*\dag}$ \quad 
  Miao Xu$^{1}$ \quad 
  Qi Wang$^{2\S}$ \quad 
  Yixiu Mao$^{2}$ \quad 
  Jinghao Chen$^{1,2\dag}$ \and 
  \textbf{Qichen Ye}$^{1}$ \quad
   \textbf{Chao Li}$^{1}$ \quad 
   \textbf{Rongquan Bai}$^{1\S}$ \quad
   \textbf{Chuan Yu}$^{1}$ \quad 
   \textbf{Jian Xu}$^{1}$ \quad 
   \textbf{Bo Zheng}$^{1}$ 
   \\
    $^{1}$Taobao \& Tmall Group of Alibaba, Beijing, China\\
  $^{2}$Department of Automation, Tsinghua University, Beijing, China\\
  \texttt{mouzhiyu.mzy@taobao.com, hhq123go@gmail.com, rongquan.br@taobao.com}}

%

\iclrfinalcopy

\begin{document}

\maketitle
\vspace{-2mm}
\begin{abstract}
Auto-bidding is a critical tool for advertisers to improve advertising performance. 
Recent progress has demonstrated that AI-Generated Bidding (AIGB), which learns a conditional generative planner from offline data, achieves superior performance compared to typical offline reinforcement learning (RL)-based auto-bidding methods. 
However, existing AIGB methods 
still face a performance bottleneck due to their inherent inability to explore beyond the static dataset with feedback.
To address this, we propose \textbf{AIGB-Pearl} (\emph{\textbf{P}lanning with \textbf{E}valu\textbf{A}tor via \textbf{RL}}), 
a novel method that integrates generative planning and policy optimization. 
The core of AIGB-Pearl lies in constructing a trajectory evaluator to assess the quality of generated scores and designing a provably sound KL-Lipschitz-constrained score-maximization scheme to ensure safe and efficient exploration beyond the offline dataset.
A practical algorithm that incorporates the synchronous coupling technique is further developed to ensure the model regularity required by the proposed scheme.
Extensive experiments on both simulated and real-world advertising systems demonstrate the state-of-the-art performance of our approach.

\end{abstract}

\section{Introduction}
\label{sec:intro}

The increasing demand for commercial digitalization has facilitated the development of the auto-bidding technique in online advertising.
Distinguished from traditional manual bidding products, auto-bidding provides advertisers with an efficient and flexible scheme to automatically optimize bids
in dynamic and competitive environments \citep{balseiro2021robust,deng2021towards,balseiro2021landscape}.
Technically, auto-bidding constitutes an offline sequential decision-making problem that aims to maximize advertising performance over a bidding episode, constrained to a static offline dataset due to operational safety concerns \citep{sorl}. 

As a standard approach to offline decision-making problems, offline reinforcement learning (RL) \citep{kumar2020conservative,mao2024doubly} is widely adopted to solve the auto-bidding problem.
By employing conservative policy search schemes, offline RL mitigates the infamous \emph{out-of-distribution} (OOD) problem \citep{fujimoto2019off}, enabling reliable generalization beyond the offline dataset.
However, their reliance on bootstrapped value estimates renders offline RL methods prone to training instability \citep{peng2024deadly}, potentially compromising policy performance.

Recent advances in generative models shed new light on offline decision-making problems \citep{zhu2023diffusion,kang2023efficient}.
Specifically,
AI-generated bidding (AIGB) models auto-bidding as a trajectory-generation task and employs a generative model to approximate the conditional trajectory distribution of the offline dataset \citep{aigb}. AIGB avoids bootstrapping and exhibits more stable training and superior performance.
However, the modeling approach in AIGB does not explicitly align with the performance-maximization of the auto-bidding problem. 
Inherently, AIGB relies on imitating trajectories from the offline dataset \citep{ajay2023is} and lacks the ability to improve its generation quality beyond the offline dataset based on the performance feedback. 
Consequently, its conditional generation in the extrapolation regime can become unreliable, potentially leading to performance degradation or even to risky trajectory generations. 


Hence, there arises a question: \textit{built on AIGB, the latest state-of-the-art auto-bidding method, can we devise a plausible scheme to involve policy optimization in its generative model?}
To this end, a natural idea is to integrate offline RL methods into AIGB. 
However, it is nontrivial to implement in the auto-bidding problem since (i) there is a lack of reward signals in AIGB to guide the generative model. 
Specifically, the generation quality of the generative model remains unknown during training,  making it infeasible to explore new trajectories beyond the offline dataset; (ii) no dedicated offline RL algorithm exists for AIGB.
In particular, theoretical analysis that guarantees safe generalization and mitigates OOD issues for generative models in auto-bidding remains largely unexplored.

To address these critical challenges, we propose \textbf{AIGB-Pearl} (Planning with EvaluAtor via RL), an RL-enhanced version of AIGB that learns a \emph{trajectory evaluator} to score generation quality and drive exploration of the generative model through continuous interaction.
The evaluator is trained through supervised learning on the offline dataset. 
Crucially, to mitigate the OOD problem and ensure reliable use of the evaluator, we examine the theoretical upper bound on the evaluator's bias. Then, guided by this analysis, we formulate a \emph{KL-Lipschitz-constrained} score-maximization objective with a provable suboptimality bound, enabling safe and effective exploration beyond the offline data.
Moreover, to perform constrained score maximization, we design a practical algorithm that incorporates the \emph{synchronous coupling} technique to satisfy the generative model's Lipschitz condition. 
In addition, we note that AIGB-Pearl does not require bootstrapping and exhibits greater training stability than offline RL methods.

To summarize, our contributions in this paper are fourfold: (i) we propose a novel generative auto-bidding method, AIGB-Pearl, that enables continuous improvement in generation quality through exploration beyond the offline dataset; (ii) we propose a provable KL-Lipschitz constrained score maximization objective with a sub-optimality bound, ensuring a safe and effective generalization beyond the offline dataset; (iii) we devise a practical algorithm with synchronous coupling that effectively ensures the Lipschitz requirement for the generative model; (iv) extensive simulated and real-world experiments demonstrate that AIGB-Pearl achieves SOTA performance and verify the effectiveness of the developed techniques in enhancing safe and effective generalization.

\vspace{-2mm}
\section{Preliminaries}
\label{sec:preliminary}
\vspace{-1mm}
\subsection{Problem Statement}
\label{sec:prob_state}
\vspace{-1mm}
This work studies the auto-bidding problem for a single advertiser subject to a budget constraint.
The auction mechanism follows a sealed-bid, second-price rule. The objective is to devise a bidding policy that maximizes the cumulative value of the impressions won over a finite bidding episode (e.g., a day) within a budget $B>0$.
As established in \citep{uscb}, the optimal bid for each impression $i$ is proportional to its intrinsic value $v_i>0$, scaled by a non-negative factor $a\ge 0$ that remains consistent across all impressions.
Under this strategy, the advertiser wins an impression $i$ if $av_i\ge p_i$ and pays $p_i$ upon winning, where $p_i>0$ is the market price.
The Return on Investment (ROI) of impression $i$ is defined as $v_i/p_i$, and we denote its upper bound as $R_m\triangleq\max_i v_i/p_i$.

However, the scaling factor is unknown a prior, and impression volatility drives its continual change throughout the bidding process.
Hence, a standard practice involves recalibrating the scaling factor $a$ at fixed intervals of $T\in\mathbb{N}_+$ time steps \citep{aigb, uscb, sorl}. 
This casts the auto-bidding to a sequential decision-making problem.

Specifically, the auto-bidding problem can be modeled as a Markov Decision Process (MDP) $<\mathcal{S}, \mathcal{A}, \mathcal{R}, \mathcal{P}>$.
The state $s_t\triangleq[t, \bar{c}_{t-1},x]\in\mathcal{S}$ is composed of the current time step $t\in [T]$, the {cost ratio $\bar{c}_{t-1}=c_{t-1}/B>0$} where $c_{t-1}$ is the advertiser's cost for impressions won between time step $t-1$ and $t$, and a static advertiser-specific feature $x$ that {includes the budget and many other individual information}.
The action $a_t\in\mathcal{A}$ denotes the calibrated scaling factor at time step $t$.
The reward $r_t\ge 0$ describes the value of the impressions won between time steps $t$ and $t+1$, and $\mathcal{P}$ denotes the state transition rule. 
The auto-bidding problem can be formulated as:
\begin{align}
\label{equ:prob_formulation}
    \max_{a_1,a_2,\cdots,a_T} \;\mathbb{E}_{ s_{t+1}\sim\mathcal{P}(\cdot|s_t,a_t)}\bigg[\sum_{t=1}^{T} r_t\bigg], \;\;\mathrm{s.t.} \sum_{t=1}^{T} c_t\le B.
\end{align}

\textbf{Offline Setting.}
Due to safety concerns---common in real-world advertising systems---we are restricted to learning the optimal bidding policy from a static \emph{offline dataset} $\mathcal{D}$ comprising historical states and actions along with associated rewards.
This makes the considered auto-bidding problem an offline sequential decision-making task.

\vspace{-1mm}
\subsection{Offline RL Methods}
\vspace{-1mm}
RL constitutes a standard approach for auto-bidding problems, seeking an optimal bidding policy $\pi:\mathcal{S}\rightarrow\mathcal{A}$ that maximizes cumulative reward. 
Specifically, this is typically achieved by learning a Q-value function, $Q(s_t,a_t)\triangleq\mathbb{E}_{\pi}[\sum_{t'=t}^Tr_{t'}]$, through temporal difference (TD) error minimization:
\begin{align}
\label{equ:q_value}
   \min_{Q} \; \mathbb{E}_{(s_t,a_t,r_t,s_{t+1})\sim\mathcal{D}}[Q(s_t,a_t)-r_t -\max_{a_{t+1}}\hat{Q}(s_{t+1},a_{t+1})]^2,
\end{align}
where $\hat{Q}$ is a target Q-value function with parameters updated via Polyak averaging \citep{mnih2015human}.
Upon convergence, the optimal bidding policy is derived as $\pi(s_t)=\arg\max_{a_t}Q(s_t,a_t)$.

Due to the offline setting of the considered auto-bidding problem, directly employing Eq. \ref{equ:q_value} results in the infamous \emph{out-of-distribution} (OOD) problem \citep{fujimoto2019off}, making the policy erroneously deviate from the offline dataset $\mathcal{D}$ \citep{mao2024offline}. 
As a standard solution, offline RL \citep{yu2020mopo,kumar2020conservative,mao2023supported,kidambi2020morel} constrains the policy's behavior near $\mathcal{D}$ during TD learning,  enabling reliable generalization beyond the offline dataset.

However, offline RL methods are notoriously unstable due to training instability caused by TD-learning \citep{peng2024deadly}, in which the bootstrapped value of the Q-function serves as its training label, resulting in an erroneous ground truth.
Training stability is critical in auto-bidding due to the absence of an accurate offline policy evaluation method and the high cost of online policy evaluation in real-world advertising systems \citep{sorl}.
\vspace{-1mm}
\subsection{Generative Auto-bidding Methods}
\vspace{-1mm}
\begin{definition}[Trajectory and Trajectory Quality]
    The \emph{trajectory} is formalized as the state sequence throughout the bidding episode, i.e., $\tau\triangleq [s_1,s_2,\cdots,s_T]$. 
    The trajectory quality is defined as the normalized cumulative reward of the trajectory, i.e., $y(\tau)\triangleq\sum_{t=1}^{T}\bar{r}_t$ \footnote{Note that, as in real-world advertising systems, the bidding process will automatically suspend once the advertiser's budget runs out, and thereby, any action sequence will not violate the budget constraint.
Hence, the trajectory quality can be directly defined as the cumulative reward of the trajectory.}, where $\bar{r}_t=r_t/B$.
\end{definition}

Unlike RL methods, the AI-generated auto-bidding (AIGB) \citep{aigb} 
treats the auto-bidding problem as a sequence generation task.
Specifically, a conditional generative model is employed to fit the conditional trajectory distribution $p_\theta(\tau|y(\tau))$ within the offline dataset $\mathcal{D}$, i.e., 
\begin{align}
\label{equ:mle}
   \max_\theta \; \mathbb{E}_{(\tau,y(\tau))\sim\mathcal{D}}[\log p_\theta(\tau|y(\tau))],
\end{align}
where $\theta$ denotes the parameter.
Let $y_m>0$ be the maximum trajectory quality in $\mathcal{D}$, we have
$\forall y\in\mathcal{D}, y\in[0, y_m]$.
During inference, AIGB follows a \emph{planning-and-control} architecture. Specifically, at each time step, a trajectory is sampled from the trained generative model that acts as the \emph{planner}, with a manually set condition $y^*\triangleq(1+\epsilon)y_m$, where $\epsilon>0$ is a hyper-parameter.
Then, an extra off-the-shelf inverse dynamic model \citep{agrawal2016learning}, acting as the \emph{controller}, is employed to compute the action. 
See Appendix \ref{app:aigb} for detailed descriptions.
AIGB avoids TD learning and generally outperforms offline RL methods \citep{aigb}.

However, AIGB relies on imitating trajectories from the offline dataset \citep{ajay2023is} and lacks the ability to improve its generation quality beyond the offline dataset based on the performance feedback. 
Consequently, AIGB's conditional generation in the extrapolation regime ($y^*>y_m$) can be unreliable without explicit reward guidance, rendering exploration undirected and potentially leading to performance degradation or even risky trajectory generation.
Furthermore, no theoretical guarantee exists for the quality of the generated trajectory in this extrapolation regime.



\vspace{-1mm}
\section{Method}
\label{sec:method}
\vspace{-1.3mm}
Enabling AIGB to explore higher-quality trajectories beyond the offline dataset with explicit reward guidance can enhance its performance and generalization.
To this end, we propose \textbf{AIGB-Pearl} (\textbf{P}lanning with \textbf{E}valu\textbf{A}tor via \textbf{RL}) that constructs a \emph{trajectory evaluator} (hereinafter referred to as the \emph{evaluator} for simplicity) to integrate RL methods into AIGB's planner training.
Specifically, the evaluator learns a \emph{score} $\hat{y}_\phi(\tau)$ to estimate the trajectory quality $y(\tau)$ via supervised learning based on the offline dataset $\mathcal{D}$, i.e., $\min_\phi\mathbb{E}_{\tau\sim\mathcal{D}}[(\hat{y}_\phi(\tau)-y(\tau))^2]$, where $\phi$ denotes the evaluator parameter.
Then, with $\phi$ fixed, the planner tries to maximize the score of its generation through iterative interactions with the evaluator, as shown in Fig. \ref{fig:aigb_pearl}.
Formally, this can be formulated as:
\begin{align}
\label{equ:max_R}
\max_\theta \; L(\theta)\triangleq\mathbb{E}_{\tau\sim p_\theta(\tau|y^*)}[\hat{y}_\phi(\tau)],
\end{align}
where the condition is fixed to $y^*$ in both training and inference stages to ensure consistency.


As shown in Eq. \ref{equ:max_R}, the effectiveness of AIGB-Pearl hinges critically on the evaluator's reliability. 
However, given the offline nature of the considered auto-bidding problem, the evaluator training is confined to the fixed dataset $\mathcal{D}$. 
Although we incorporate several techniques into the evaluator’s supervised training to enhance its prediction accuracy (as detailed in Section~\ref{sec:evaluator_design} and Appendix~\ref{app:aigb_pearl_evaluator}), directly applying Eq. \ref{equ:max_R} to the planner can still trigger the infamous OOD problem due to the evaluator's generalization limits outside the data support, potentially degrading the planner's true performance. 
This issue is particularly acute in auto-bidding, a risk-sensitive domain in which suboptimal or anomalous trajectory generation can result in substantial monetary losses or campaign failures.
Notably, there remains a lack of theoretically principled approaches to this challenge.

To address this challenge, we first analyze the theoretical bounds on the evaluator's bias in Section~\ref{sec:score_max}. Then, guided by this analysis, we propose a KL-Lipschitz-constrained score-maximization objective for the planner to ensure reliable use of the evaluator. 
Notably, this objective is theoretically justified by a sub-optimality bound established in Section \ref{sec:sub_optimality_bound}.
Finally, a practical algorithm is presented in Section \ref{sec:practical_algorithm}, which employs a synchronous coupling method to satisfy the planner's Lipschitz constraint in Section \ref{sec:planner_loss}.

\subsection{KL-Lipschitz-constrained score maximization}
\label{sec:score_max}
This section focuses on the reliable exploitation of the evaluator-guided score maximization. 

\textbf{Our basic idea} is to optimize $\theta$ within a
domain where the gap between the planner's score $L(\theta)$ and its true performance $J(\theta)\triangleq\mathbb{E}_{\tau\sim p_\theta(\tau | y^*)}[y(\tau)]$ is bounded by a small certifiable upper bound. This ensures the score maximization occurs only in regions where the evaluator is reliable. 
\begin{align}
\label{equ:gap}
   |J(\theta)-L(\theta)| =|\mathbb{E}_{\tau\sim p_\theta(\tau | y^*)}[y(\tau)]-\mathbb{E}_{\tau\sim p_\theta(\tau | y^*)}[\hat{y}_\phi(\tau)]|.
\end{align}
In the following, we investigate this gap.
Specifically, we find that the trajectory quality $y(\tau)$ is Lipschitz continuous as stated in Theorem \ref{thm:lipschitz_continuous}. The proof is given in Appendix \ref{app:proof_thm1}.
\begin{theorem}[Lipschitz Continuous of $y(\tau)$.]
\label{thm:lipschitz_continuous}
The trajectory quality $y(\tau)$ is $\sqrt{T}R_m$-Lipschitz continuous with respect to the Frobenius norm.
\end{theorem}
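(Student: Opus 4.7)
The plan is to decompose the trajectory quality into per-step rewards, bound each reward difference by the corresponding cost difference using the ROI cap $R_m$, and then combine via Cauchy--Schwarz so that the sum over $T$ steps produces the advertised $\sqrt{T}$ factor.

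First, I would write
\[
|y(\tau)-y(\tau')| \;=\; \Big|\sum_{t=1}^{T}(r_t-r_t')\Big| \;\le\; \sum_{t=1}^{T}|r_t-r_t'|,
\]
and argue a per-step Lipschitz bound $|r_t-r_t'|\le R_m\,|c_t-c_t'|$. The justification is that $r_t$ is the total value of impressions won in the interval $[t,t+1]$ while $c_t$ is the total spend in the same interval, and every impression obeys $v_i/p_i\le R_m$. Therefore, viewing $r_t$ as a function of the spent amount $c_t$ on a fixed market realization, any marginal change in $c_t$ corresponds to adding or removing impressions whose values-per-dollar are capped by $R_m$, so the induced change in $r_t$ is at most $R_m$ times the change in $c_t$.

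Next, I would apply Cauchy--Schwarz to turn the $\ell_1$ sum into an $\ell_2$ quantity compatible with the Frobenius norm:
\[
\sum_{t=1}^{T}|c_t-c_t'| \;\le\; \sqrt{T}\,\sqrt{\sum_{t=1}^{T}(c_t-c_t')^2}.
\]
Finally, since each $c_t$ is a single coordinate of the state vector carrying it (per the definition $s_t=[t,c_{t-1},x]$), the vector of cost differences is a coordinate projection of the flattened trajectory difference, giving $\sqrt{\sum_t(c_t-c_t')^2}\le\|\tau-\tau'\|_F$. Chaining the three inequalities yields $|y(\tau)-y(\tau')|\le \sqrt{T}\,R_m\,\|\tau-\tau'\|_F$, as claimed.

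\paragraph{Main obstacle.} The only nontrivial step is the per-stage bound $|r_t-r_t'|\le R_m|c_t-c_t'|$. The ROI cap gives the pointwise inequality $r_t\le R_m c_t$ directly, but lifting this to a Lipschitz (i.e., \emph{difference}) inequality requires viewing $r_t$ as a deterministic, monotone function of $c_t$ on a common market realization; otherwise one must invoke an implicit assumption that the market stream (prices, values) is fixed by the shared static feature $x$ appearing in every $s_t$. I expect the paper to treat this as an immediate consequence of the ROI definition, but the careful phrasing of that reduction is the step where the proof can become subtle; the remaining algebra (triangle inequality, Cauchy--Schwarz, coordinate projection into the Frobenius norm) is routine.
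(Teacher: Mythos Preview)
Your proposal is correct and follows essentially the same route as the paper: triangle inequality on rewards, the per-step bound $|r_t-r_t'|\le R_m|c_t-c_t'|$, then Cauchy--Schwarz to extract the $\sqrt{T}$ factor against the Frobenius norm. The paper resolves exactly the obstacle you flag by making the shared-market assumption explicit: both trajectories face the same impression stream $\{(v_t^i,p_t^i)\}$, so for $a_{2,t}\ge a_{1,t}$ the reward and cost differences are sums of $v_t^i$ and $p_t^i$ over the \emph{same} index set $\{i:a_{1,t}\le p_t^i/v_t^i\le a_{2,t}\}$, whence $|r_{1,t}-r_{2,t}|=\sum_i v_t^i=\sum_i (v_t^i/p_t^i)\,p_t^i\le R_m\sum_i p_t^i=R_m|c_{1,t}-c_{2,t}|$.
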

{Motivated by Theorem \ref{thm:lipschitz_continuous}, we enforce a $\sqrt{T}R_m$-Lipschitz regularity on the evaluator's training to inherit the Lipschitz continuity of the true trajectory quality $y(\tau)$} (as described in Section \ref{sec:evaluator_design}). 
As the trained evaluator's Lipschitz constant may not equal $\sqrt{T}R_m$ exactly, we denote it as $k\sqrt{T}R_m$, where $k\ge 0$ quantifies the degree of violation.
Note that a tighter adherence of the evaluator to the Lipschitz constraint yields a value of $k$ closer to 1.

Equipped with Theorem \ref{thm:lipschitz_continuous} and the Lipschitz property of the evaluator, we derive the following upper bound on the performance gap between $J(\theta)$ and $L(\theta)$, and the proof is given in Appendix \ref{app:proof_thm2}.
\begin{theorem}[Evaluator Bias in Planning Performance Bound]
\label{thm:performance_gap_bound}
Let the upper bound of the evaluator's bias on its training dataset $\mathcal{D}$ be $\delta_D>0$, i.e., $\mathbb{E}_{\tau\sim D}|y(\tau)-\hat{y}_\phi(\tau)|\le \delta_D$, 
and let the Lipschitz constant of  $\hat{y}_\phi(\tau)$ be $k\sqrt{T}R_m$.
The gap between the planner's score $L(\theta)$ and its true performance $J(\theta)$ can be bounded by:
\begin{align}
   |J(\theta)-L(\theta)|\le \delta_D +(1+k)\sqrt{T}R_m\mathbb{E}_{y\sim p_D(y)}\bigg[\underbrace{W_1(p_\theta(\tau|y^*), p_\theta(\tau|y))}_{\text{ Generation sensitivity to $y$}} + \underbrace{W_1(p_\theta(\tau|y), p_D(\tau|y))}_{\text{Imitation error on $\mathcal{D}$}}\bigg],\notag
\end{align}
where $W_1$ denotes the 1-Wasserstein distance.
\end{theorem}
Note that $\delta_D$ could be regulated to a small value via supervised training of the evaluator, and $k$ depends on the Lipschitz property of the resulting evaluator as stated before \footnote{Note that $k$ cannot approach zero without compromising $\delta_D$, as excessively small $k$ prevents the evaluator from fitting the offline dataset $\mathcal{D}$.}.
Consequently, bounding the evaluator bias in the planner's performance requires constraining the following two factors: 
\begin{itemize}
    \item the planner's generation sensitivity to condition $y$ (the first Wasserstein term)
    \item the planner's imitation error on the offline dataset (the second Wasserstein term). 
\end{itemize}
Specifically, we establish that the first Wasserstein term can be bounded by the Lipschitz constant $\text{Lip}_{W_1}(p_\theta(\tau|y))$ of the planner with respect to the condition $y$ measured by $W_1$:
\begin{align}
\label{equ:lipschiz_planner_bound}
     \mathbb{E}_{y\sim p_D(y)}[W_1(p_\theta(\tau|y^*), p_\theta(\tau|y))]\le  (1+\epsilon)y_m\text{Lip}_{W_1}(p_\theta(\tau|y)).
\end{align}
The proof is given in Appendix \ref{app:proof_lipschitz_planner}. Therefore, to ensure the boundedness of the first Wasserstein term, we constrain the planner's Lipschitz constant $\text{Lip}_{W_1}(p_\theta(\tau|y))$ to a positive hyperparameter $L_p$. A lower bound analysis of $L_p$ is provided later in Eq. \ref{equ:lower_bound_Lp}. 

Moreover, we establish that a constrained KL divergence $\mathbb{E}_{y\sim p_D(y)}[D_\text{KL}(p_D(\tau|y)\|p_\theta(\tau|y))]\le \delta_K$ could bound the expectation of the second Wasserstein distance term as follows, 
where $\delta_K>0$ is a hyperparameter and can be set to a small value, close to zero. 
See Appendix \ref{app:proof_kl_bound} for the proof.
Note that the KL divergence constraint here inherently makes the planner perform conditional behavior cloning on the offline dataset $\mathcal{D}$ \citep{guo2025deepseek}.
\begin{align}
\label{equ:kl_bound}
 \mathbb{E}_{y\sim p_D(y)}[W_1(p_\theta(\tau|y), p_D(\tau|y))]\le  \sqrt{\delta_K}.
\end{align}

Collectively, to effectively perform score maximization with a small, certifiable evaluator bias, we enforce Lipschitz continuity of the planner with respect to the condition $y$, while preserving its behavior cloning fidelity to the offline dataset $\mathcal{D}$. 
Formally, Eq. \ref{equ:max_R} is transformed to:
\begin{align}
&\max_\theta \;\;\;\; L(\theta)\qquad\qquad\qquad\qquad\qquad\qquad\qquad \;\;\;\text{(Score Maximization)}\label{equ:max_R_constrained}\\
&\;\mathrm{s.t.}\;\;\;\;\; \mathbb{E}_{y\sim p_D(y)}[D_\text{KL}(p_D(\tau|y) \| p_\theta(\tau|y))]\le \delta_K\;\;\text{(KL Constraint)}\tag{\ref{equ:max_R_constrained}{a}}\label{equ:max_R_constraint_KL}\\
&\quad\;\;\;\;\;\;\;\;\text{Lip}_{W_1}(p_\theta(\tau|y))\le L_p \qquad\qquad\qquad\qquad\text{(Lipschitz Constraint)}\tag{\ref{equ:max_R_constrained}{b}}\label{equ:max_R_constraint_Lipshcitz}
\end{align}
Eq. \ref{equ:max_R_constrained} forms the score maximization objective in AIGB-Pearl.

\begin{wrapfigure}{r}{0.5\textwidth}
\vspace{-4mm}
  \centering
  \begin{minipage}{1.0\linewidth}
    \includegraphics[width=\linewidth]{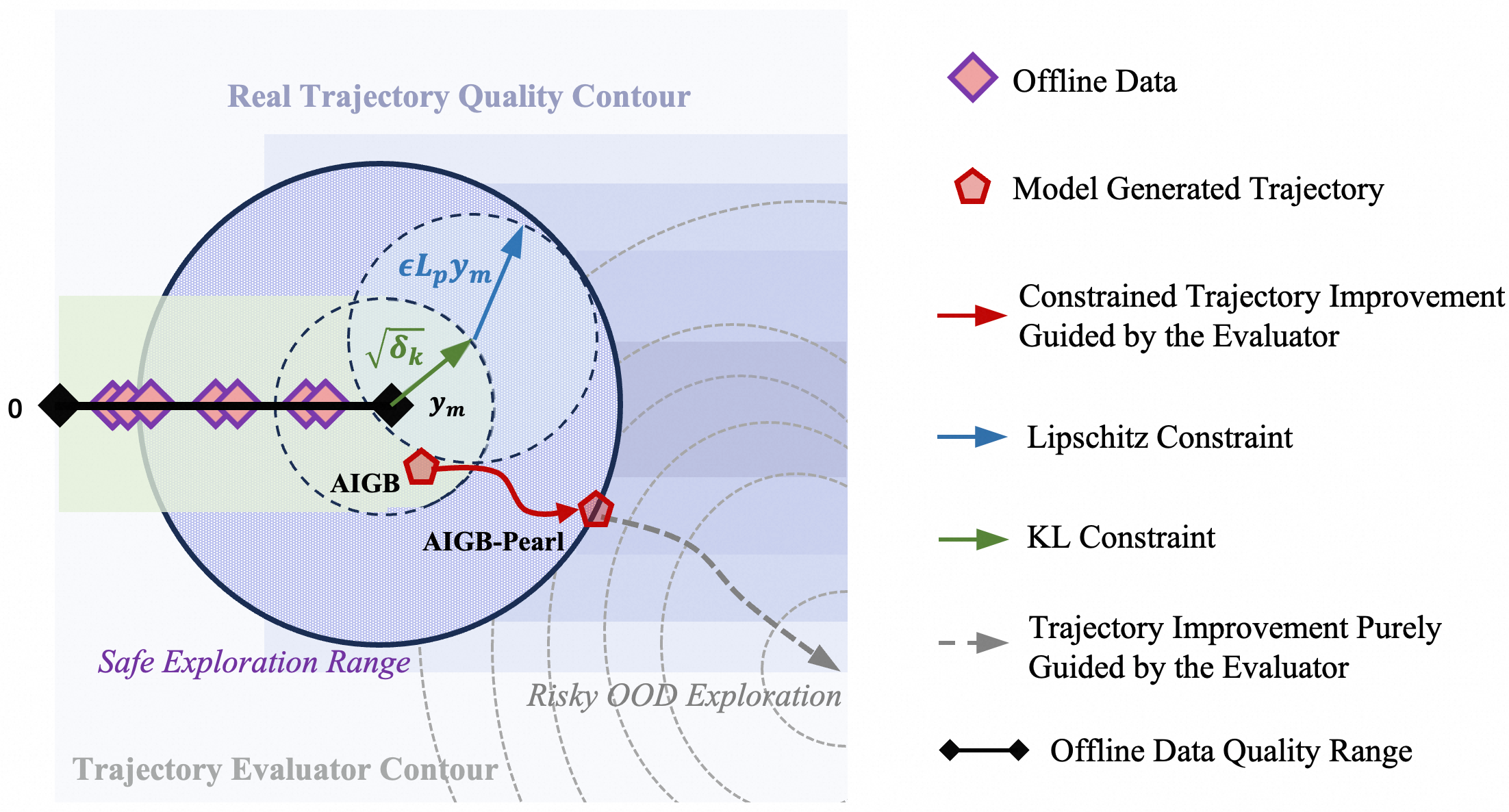}
   \caption{{\small Schematic illustration of the KL-Lipschitz-constrained score-maximization. It enables safe trajectory improvement within a theoretically certified neighborhood of the high quality trajectories in the offline dataset, where the evaluator remains high accuracy.}}
    \label{fig:kl_lipschitz}
  \end{minipage}
  \vspace{-8pt}
  
\end{wrapfigure}
\textbf{Remark 1.}
\emph{Intuitively, the KL and Lipschitz constraints jointly ensure the planner's generation under condition $y^*$ remains within a certified neighborhood of the high-quality trajectories in the offline dataset $\mathcal{D}$.
As illustrated in Fig. \ref{fig:kl_lipschitz}, the green region represents the feasible set of trajectories $p_\theta(\tau|y)$ satisfying the KL constraint for $y\in \mathcal{D}$ \footnote{Note that the KL-constraint feasible region may not be a regular topology since the generated trajectories $p_\theta(\tau|y)$ should also satisfy the Lipschitz constraint under condition $y\in \mathcal{D}$.}.
The Lipschitz constraint makes the generated trajectories $p_\theta(\tau|y^*)$ remain within a neighborhood of the best-quality trajectory in the offline dataset $p_\theta(\tau | y_m)$, as illustrated by the blue circle with radius $\epsilon L_p y_m$.
The union of all such circles constitutes the total trajectory exploration range.
Meanwhile, the evaluator trained on $\mathcal{D}$ maintains high accuracy within this $\mathcal{D}$-proximal region, and the Lipschitz regularization on the evaluator bounds its sensitivity to input perturbations, preventing drastic value fluctuations in OOD regions and promoting more reliable extrapolation.
Overall, KL-Lipschitz constrained optimization, guided by a Lipschitz continuity evaluator, enables safe trajectory improvement within a theoretically certified neighborhood of high-quality offline trajectories, effectively mitigating risky OOD exploration.
}

\subsubsection{sub-optimality gap bound}
\label{sec:sub_optimality_bound}
This section focuses on presenting and analyzing the sub-optimality bound of the solution to the proposed Eq. \ref{equ:max_R_constrained}.
Specifically, denote the solution to the true performance $J(\theta)$ as $\theta^*\triangleq\arg\max_\theta J(\theta)$, and denote the solution to the proposed Eq. \ref{equ:max_R_constrained} as $\hat{\theta}$.
The following theorem gives the sub-optimality bound of the planner's performance, and the proof is given in Appendix \ref{app:proof_suboptimality}.
\begin{theorem}[Sub-optimality Gap Bound] 
\label{thm:sub_optimality}
Let $\delta_M\triangleq \mathbb{E}_{y\sim p_D(y)}[D_{KL}(p_D(\tau|y)\|p_{\theta^*}(\tau|y^*))]$ be the expected distance between the optimal trajectory distribution and the trajectory distribution of the offline dataset $\mathcal{D}$.
The true performance gap between the optimal parameter $\theta^*$ and the solution $\hat{\theta}$ to Eq. \ref{equ:max_R_constrained}  is bounded by:
\begin{align}
    J(\theta^*) - J(\hat{\theta})\le 2\delta_D+(1+2k)\sqrt{T}R_m\bigg[\sqrt{\delta_M}+\sqrt{\delta_K}+(1+\epsilon)y_mL_p\bigg].
\end{align}
\end{theorem}
\textbf{Theoretical Result Analysis.} 
In Theorem \ref{thm:sub_optimality}, the constants $\delta_M, R_m, T, \epsilon, y_m$ characterize domain-specific properties of the auto-bidding task and the offline dataset $\mathcal{D}$. Nonetheless, a lower training error $\delta_D$ of the evaluator and a closer $k$ to $1$ correspond to a smaller sub-optimality gap. Note that $k$ cannot be smaller than $1$ without compromising $\delta_D$, as excessively small $k$ prevents the evaluator from fitting the offline dataset $\mathcal{D}$.

Moreover, in Theorem \ref{thm:sub_optimality}, a lower behavior cloning error $\delta_K$ and a lower Lipschitz constant $L_p$ of the planner lead to a smaller sub-optimality gap. However, an excessively small $L_p$ prevents the planner from behavior cloning the offline dataset $\mathcal{D}$ (as required by the KL constraint), resulting in a large $\delta_K$.
Actually, a theoretical lower bound for $L_p$ is given by the Lipschitz constant of the conditional trajectory distribution of the offline dataset $p_D(\tau|y)$:
\begin{align}
\label{equ:lower_bound_Lp}
    L_p\ge \sup_{\substack{y_1\neq y_2}}\frac{W_1(p_D(\tau|y_1), p_D(\tau|y_2))}{|y_1-y_2|}.
\end{align}
where $y_1,y_2\in\mathcal{D}$. Consequently, we leverage this lower bound of $L_p$ in AIGB-Pearl.
A further discussion on the theoretical performance range of AIGB-Pearl is given in Appendix \ref{app:theory_discussion}.


\subsection{Practical Algorithm Design}
\label{sec:practical_algorithm}
This section focuses on the practical algorithm implementation of Eq. \ref{equ:max_R_constrained}. 
Section \ref{sec:evaluator_design} first presents our reliability-enhanced evaluator architecture, followed by the synchronous-coupling-based Lipschitz planner design in Section \ref{sec:planner_loss}.

\subsubsection{Lipschitz Trajectory Evaluator}
\label{sec:evaluator_design}
As shown in Fig. \ref{fig:aigb_pearl}, the evaluator processes the trajectory $\tau$ to predict a score $\hat{y}_\phi(\tau)$ for quality estimation. 
The evaluator is trained via supervised learning using the offline dataset $\mathcal{D}$. Besides, to satisfy $\sqrt{T}R_m$-Lipschitz constraint requirement according to Theorem \ref{thm:performance_gap_bound}, we add Lipschitz regularization term to the training loss of the evaluator, which can be expressed as:
\begin{align}
\label{equ:evaluator_loss}
    l_e(\phi)= \underbrace{\mathbb{E}_{\tau\sim \mathcal{D}}\bigg[(\hat{y}_{\phi}(\tau)-y(\tau))^2\bigg]}_{\text{fitting the ground truth}}+\beta_1\underbrace{\mathbb{E}_{\tau_1,\tau_2}\bigg[|\hat{y}_\phi(\tau_1)-\hat{y}_\phi(\tau_2)|-\sqrt{T}R_m\left\|\tau_1-\tau_2\right\|_F\bigg]_+}_{\text{Lipschitz penalty}},
\end{align}
where $\beta_1>0$ is a hyper-parameter, $[\cdot]_+\triangleq\max\{0, \cdot\}$. 
Moreover, to further enhance the prediction accuracy of the evaluator, 
we devise two specific techniques, including the LLM Embedding enhancement and pair-wise learning, whose details are given in Appendix \ref{app:aigb_pearl_evaluator}.



\subsubsection{Lipschitz Planner With Synchronous Coupling}
\label{sec:planner_loss}
As shown in Fig. \ref{fig:aigb_pearl}, the planner is implemented by a Causal Transformer \citep{chen2021decision} that generates trajectories in an auto-regressive manner. 
Specifically, the model takes the condition $y$ and history states $s_{1:t}$ as input tokens, and predicts the next state as a Gaussian distribution, $p_\theta(s_{t+1}|s_{1:t},y)=\mathcal{N}(\mu_\theta(s_{1:t}, y,t), \sigma_\theta^2(s_{1:t}, y,t))$, where $\mu_\theta$ denotes the mean and $\sigma_\theta>0$ the standard deviation.
During the auto-regressive generation process, each output state is sampled from the Gaussian distribution using the reparameterization trick, i.e., $s_{t+1}=\mu_\theta(s_{1:t}, y,t)+\sigma_\theta(s_{1:t}, y,t) \cdot \eta_t$, where $\eta_t\sim \mathcal{N}(0, I)$ \footnote{Note that the time step $t$ and the static advertiser feature $x$ in the state $s_t=[t, c_{t-1}, x]$ do not need to be generated. We only generate the next cost ratio $c_t$ in practice.}.

\textbf{Regularized Planner Training Loss.}
To perform the score maximization in Eq. \ref{equ:max_R_constrained}, we involve two regularization terms in the planner's training loss $l_p(\theta)$, including a conditional behavior cloning loss, corresponding to the KL constraint Eq. \ref{equ:max_R_constraint_KL}, and a Lipschitz penalty loss, corresponding to the Lipschitz constraint Eq. \ref{equ:max_R_constraint_Lipshcitz}, i.e.,
\begin{align}
\label{equ:planner_loss}
    l_p(\theta) =& -\underbrace{\mathbb{E}_{\tau\sim p_\theta(\tau|y^*)}[\hat{y}_\phi(\tau)]}_{\text{planner score }L(\theta)} - \beta_2\underbrace{\mathbb{E}_{(\tau,y)\sim p_D}[\log p_\theta(\tau|y)]}_{\text{conditional behavior clone}} \notag\\
    &+\beta_3 \underbrace{\mathbb{E}_{y_1,y_2\in \mathcal{D}\cup\{y^*\}}\bigg[{W}_1(p_\theta(\tau|y_1),p_\theta(\tau|y_2))-L_p|y_1-y_2|\bigg]_+}_{\text{Lipschitz penalty, where  ${W}_1(p_\theta(\tau|y_1),p_\theta(\tau|y_2))$ is replaced by $ \hat{W}_1(y_1,y_2;\theta)$}},
\end{align}
where $\beta_2,\beta_3>0$ are two hyper-parameters.
With prior RL works \citep{sutton1999policy}, we can derive the closed-form expression of planner's score gradient $\nabla_\theta L(\theta)$ as shown in Appendix \ref{app:proof_score_gradient}.
The core of the planner loss lies in the computation of ${W}_1(p_\theta(\tau|y_1),p_\theta(\tau|y_2))$.

\begin{figure}
  \centering
\includegraphics[width=0.95\linewidth]{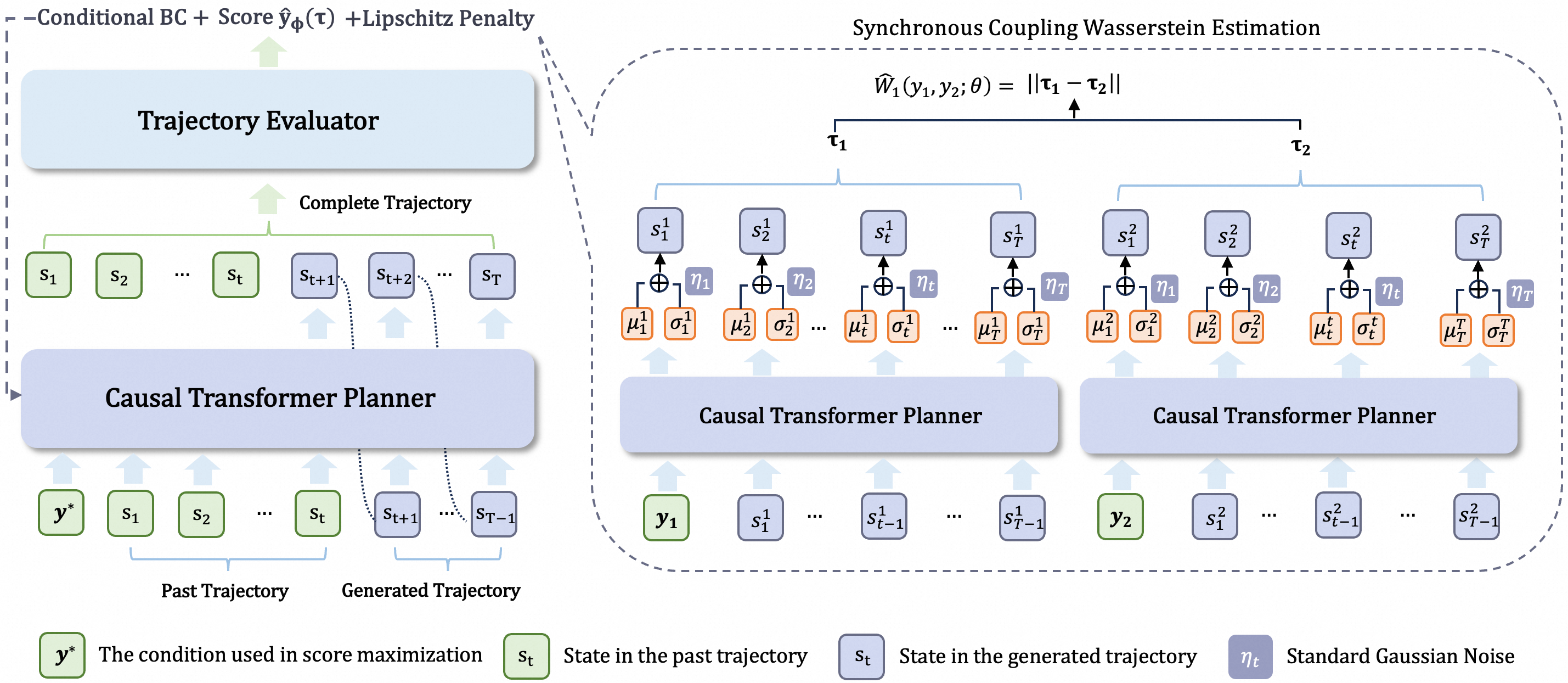}
  \caption{\textbf{AIGB-Pearl} \emph{(Planning with EvaluAtor via RL)} constructs a trajectory evaluator to score the trajectory quality and let the planner maximize the obtained score under the KL-Lipschitz constraint through continuous interaction with the evaluator. A synchronous coupling method is used to estimate the Wasserstein term in the Lipschitz penalty.
  }
  \label{fig:aigb_pearl}
  \vspace{-0.5cm}
\end{figure}

\textbf{Wasserstein Upper Bound as Surrogate.}
Accurate computation of this Wasserstein distance term is challenging, as it requires finding the optimal coupling between $p_\theta(\tau|y_1)$ and $p_\theta(\tau|y_2)$ that minimizes the expected transportation cost.
Nonetheless, we can choose a certain coupling $\gamma\in \Gamma(p_\theta(\tau|y_1), p_\theta(\tau | y_2))$ to obtain an upper bound of this Wasserstein term, i.e.,  
\begin{align}
\label{equ:upper_bound}
{W}_1(p_\theta(\tau|y_1),p_\theta(\tau|y_2))&\triangleq \inf_{\gamma\in \Gamma(p_\theta(\tau|y_1), p_\theta(\tau|y_2))}\mathbb{E}_{(\tau_1,\tau_2)\sim\gamma}\bigg[\sum_{t}\| s_t^1-s_t^2\|\bigg] \notag\\
    &\le \mathbb{E}_{\eta_{1:T}\sim \mathcal{N}(0,I)}\bigg[\sum_t\|s_t^1-s_t^2]\|\bigg]\triangleq \hat{W}_1(y_1,y_2;\theta).
\end{align}
where $\hat{W}_1(y_1,y_2;\theta)$ denotes the upper bound, and $s^i_t$ is the $t$-th state in trajectory $\tau_i$. 
It can be seen that $\hat{W}_1(y_1,y_2;\theta)\le L_p|y_1-y_2|$ acts as a sufficient condition to make the planner $L_p$-Lipschitz continuous. Thus, we replace ${W}_1(p_\theta(\tau|y_1),p_\theta(\tau|y_2))$ by this upper bound 
in the planner loss.

\textbf{Synchronous Coupling Wasserstein.}
Instead of using random couplings, we employ a \emph{synchronous coupling} $\gamma_\text{sync}$ to make the upper bound tighter.
Specifically, two trajectories $\tau_1$ and $\tau_2$---conditioned on $y_1$ and $y_2$, respectively---are generated using the same sequence of Gaussian noise $\{\eta_1, \eta_2, ...\eta_T\}$.
The definition of $\hat{W}_1(y_1,y_2;\theta)$ is given in Eq. \ref{equ:upper_bound}.
Compared to random couplings, the synchronous coupling reduces spurious variance in the trajectory comparison by aligning stochasticity through shared noise, resulting in a tighter upper bound on the Wasserstein distance \citep{lindvall2002lectures}.

Moreover, if we make the predicted variance $\sigma_\theta$ of the planner a fixed constant, then the expression of  $\hat{W}_1(y_1,y_2;\theta)$ can be further simplified to 
$\hat{W}_1(y_1,y_2;\theta)=\sum_t\|\mu_\theta(s^1_{1:t},y_1,t)-\mu_\theta(s^2_{1:t},y_2, t) \|$.
The overall AIGB-Pearl algorithm is summarized in Algorithm \ref{algo} in Appendix \ref{app:aigb_pearl} due to page limits.

\vspace{-1.5mm}

\setlength{\tabcolsep}{3pt}
\setlength{\abovecaptionskip}{-0.1pt}
\begin{table*}[t]
    \centering
    \caption{Overall performance (GMV) in simulated experiments with 30 advertisers.
 $\Delta$ indicates the relative improvement of AIGB-Pearl against the most competitive baseline (which is underlined). Note that the absolute values are normalized without specific meanings; only $\Delta$ matters.}
 \footnotesize
    \begin{tabular}{c|cccccccc|cc}
    \toprule
    \textbf{Budget} & \textbf{USCB} & \textbf{BCQ} &\textbf{CQL} & \textbf{IQL} & \textbf{MORL} & 
    \textbf{MOPO} & \textbf{DT} & \textbf{DiffBid} & \textbf{AIGB-Pearl} & $\bm{\Delta}$\\
    \midrule
       1.5k & {454.25} & 454.72 & 461.82 & 456.80 & 468.49 & 470.38&477.39 & \underline{480.76} & {502.98} & \textbf{+4.62\%} \\
  2.0k & 482.67 & 483.50 & 475.78 & 486.56 & 488.12 & 489.27 &507.30 & \underline{511.17} & {521.84} & \textbf{+2.09\%} \\
    2.5k & 497.66 & 498.77 & 481.37 & 518.27 & 511.93 & 523.91 & 527.88 & \underline{531.29} & {545.03} & \textbf{+2.59\%} \\
    3.0k & 500.60 & 501.86 & 491.36 & 549.19 & 553.91 & 549.01 & 550.66 & \underline{556.32} & {574.17} & \textbf{+3.21\%} \\
    \bottomrule
    \end{tabular}
    \label{tab:main_exp_sim}
    \vspace{-5pt}
\end{table*}

\begin{table*}
    \centering
    \caption{Overall performance in real-world A/B tests, involving 6k advertisers over 19 days.}
    \begin{adjustbox}{max width =1.0\linewidth}
    \begin{tabular}{c|cccc|c|cccc}
    \toprule
       Methods & GMV & BuyCnt & ROI & Cost & Methods & GMV & BuyCnt & ROI & Cost \\
    
    \midrule
       \textbf{DiffBid} & 76,390,174 & 650,962 & 5.31 & 14,395,290 & \textbf{USCB} & 52,182,805 & 516,994 & 4.92 & 10,598,486\\
       \textbf{AIGB-Pearl} & {78,676,009} & {665,173} & {5.41} & 14,551,054 & \textbf{AIGB-Pearl} & {53,973,101} & {520,796} & {5.13} & 10,515,772\\
       $\bm \Delta$ & \textbf{+3.00\%} & \textbf{+2.20\%} & \textbf{+1.89\%} & +1.10\% & $\bm \Delta$ & \textbf{+3.43\%} & \textbf{+0.74\%} & \textbf{+4.24\%} & -0.78\% \\
    \midrule
    
    Methods & GMV & BuyCnt & ROI & Cost & Methods & GMV & BuyCnt & ROI & Cost \\
    \midrule
        \textbf{DT} & 34,808,665 & 341,995 & 5.61 & 6,205,665 & \textbf{MOPO} & 51,674,071 & 579,332 & 3.08 & 16,771,892 \\
       \textbf{AIGB-Pearl} & {35,957,933} & {344,194} & {5.77} & 6,246,512 & \textbf{AIGB-Pearl} & {53,292,945} & {591,741} & {3.23} & 16,475,670 \\
       $\bm \Delta$ & \textbf{+3.30\%} & \textbf{+0.64\%} & \textbf{+0.16\%} & +0.66\% & $\bm \Delta$ & \textbf{+3.13\%} & \textbf{+2.14\%} & \textbf{+4.87\%} & -1.77\%\\
    \bottomrule
    \end{tabular}
    \end{adjustbox}
    \label{tab:main_exp_real}
    \vspace{-15pt}
\end{table*}

\section{Experiments}
\vspace{-2mm}
We conduct both simulated and real-world experiments to validate the effectiveness of our approach.
In the experiments, we mainly investigate the following {Research Questions} (\textbf{RQ}s): (1) Does enhancing AIGB with policy optimization improve overall performance, and can it generalize better to unseen data compared to existing AIGB methods? (Section \ref{sec:exp_overall_performance})
(2) How does the KL-Lipschitz constraint affect the performance of the planner? (Section \ref{sec:ablation_study})
(3) Can the proposed method guarantee the Lipschitz property of the evaluator and the planner? (Section \ref{sec:exp_evaluator_accuracy}).
{(4) What is the evaluator’s accuracy on the training data, and how well does it generalize to unseen data? (Section \ref{sec:exp_evaluator_acc_examination}).} The training stability of AIGB-Pearl is studied in Appendix \ref{app:exp_stability}.

\vspace{-1.5mm}

\subsection{Experiment Setup}
\vspace{-1.5mm}

\textbf{Experiment Environment.}
We conduct simulated experiments in an open-source offline advertising system with 30 advertisers of four budget levels (1.5k, 2.0k, 2.5k, and 3.0k),
as in \citep{sorl, aigb}. 
The offline dataset comprises 5k trajectories generated by 20 advertisers.
Extra detailed settings of simulated experiments are given in Appendix  \ref{app:exp_simulate}. 
For real-world experiments, we conduct online A/B tests on one of the world's largest E-commerce platforms, TaoBao.
The offline dataset comprises 200k trajectories of 10k advertisers.
See Appendix \ref{app:exp_real_world} for extra detailed settings of real-world experiments. 
In both simulated and real-world experiments, we employ the same inverse dynamics model from \citet{agrawal2016learning} as the controller in AIGB.
{Moreover, the evaluator is trained on the entire offline dataset, and its generalization ability is evaluated using $K$-fold cross-validation with $K=5$.}

\textbf{Baselines.}
We compare our method with state-of-the-art AIGB methods, including \textbf{DiffBid} \citep{aigb} and \textbf{DT} \citep{chen2021decision}, which learn from conditional behavior cloning of offline datasets using a diffusion model and a Causal Transformer, respectively.
We also compare our method with RL auto-bidding methods, including  \textbf{USCB} \citep{uscb} that learns the auto-bidding policy in a manually constructed advertising system with DDPG \citep{silver2014deterministic}; and offline RL auto-bidding methods, including model-free offline RL methods \textbf{BCQ} \citep{fujimoto2019off}, \textbf{CQL} \citep{kumar2020conservative}, and \textbf{IQL} \citep{kostrikovoffline}, and model-based offline RL methods \textbf{MOPO} \citep{yu2020mopo} and \textbf{MORL} \citep{mou2025permutation}.

\vspace{-1mm}
\textbf{Performance Index.}
The objective in the auto-bidding problem Eq. \ref{equ:prob_formulation}, i.e., the cumulative rewards over the bidding episode, acts as the main performance index in our experiments and is referred to as the \textit{gross merchandise volume}, \textbf{GMV}.
In addition, we utilize three other metrics commonly used in the auto-bidding problem to evaluate the performance of our approach. The first metric is the total number of impressions won over the bidding episode, referred to as the \textbf{BuyCnt}. 
The second metric is the \textbf{Cost} over the bidding episode, and the third is the \emph{return on investment} \textbf{ROI}, defined as the ratio of GMV to Cost. 
Note that larger values of GMV, BuyCnt, and ROI with a Cost oscillating within an acceptable tolerance ($\pm 2\%$) indicate a better performance.  

\vspace{-1mm}
\subsection{Overall Performance}
\label{sec:exp_overall_performance}

\vspace{-1mm}

\begin{table*}
    \centering
    \caption{Generalization performance in real-world A/B tests with unseen advertisers against AIGB methods, involving 4k advertisers over 19 days.}
    \begin{adjustbox}{max width =1.0\linewidth}
    \footnotesize
    \begin{tabular}{c|cccc|c|cccc}
    \toprule
       Methods & GMV & BuyCnt & ROI & Cost & Methods & GMV & BuyCnt & ROI & Cost \\
    
    \midrule
       \textbf{DiffBid} & 67,092,973 & 553,020 & 5.39 & 12,444,306 & \textbf{DT} & 30,562,007 &300,271  &  5.61&5,450,573 \\
       \textbf{AIGB-Pearl} & {69,252,539} & {565,776}  & {5.53} & 12,534,379 & \textbf{AIGB-Pearl} & {31,502,309}  & {305,202}  &  {5.74} & 5,484,473 \\
       $\bm \Delta$ & \textbf{+3.32\%} & \textbf{+2.31\%} & \textbf{+2.48\%} & +0.72\% & $\bm \Delta$ & \textbf{+3.08\%}  & \textbf{+1.64\%}  & \textbf{+2.32\%} & +0.62\% \\
        \bottomrule
    \end{tabular}
    \end{adjustbox}
    \label{tab:gene_exp_real}
\end{table*}

\textbf{To answer RQ(1): } 
Table \ref{tab:main_exp_sim} shows that our method consistently outperforms all baselines in GMV across all four budget levels in simulated experiments.
In real-world experiments, Table \ref{tab:main_exp_real} shows that our method also achieves superior performance on GMV, BuyCnt, and ROI, with Cost fluctuations of less than 2\%.
Notably, both simulated and real-world experiments consistently
demonstrate that AIGB-Pearl achieves a \textbf{+3\%} improvement in GMV over the AIGB, the state-of-the-art auto-bidding method.
Since our method and DiffBid share the same controller, the performance gain stems solely from the planner. 
This provides strong empirical evidence that the proposed conservative RL learning for score maximization effectively enhances overall performance.

Notably, we also apply AIGB-Pearl to another important auto-bidding problem, named TargetROAS (See Appendix \ref{app:exp_targetroas}). Real-world experiments show that AIGB-Pearl achieves a \textbf{+5\%} improvement in GMV compared to AIGB.
{It is worth noting that a GMV uplift exceeding 2\% is highly significant, translating to \textbf{millions} of RMB in additional \textbf{daily} GMV on Taobao-scale advertising platforms.}

\textbf{Generalization Ability.}
We evaluate AIGB-Pearl on advertisers not used to generate trajectories in the offline dataset and compare it with existing AIGB methods. 
For simplicity, we refer to these advertisers as \emph{advertisers outside the offline dataset}. 
Table \ref{tab:gene_exp_real} reports the performance on 4k advertisers outside the offline dataset in real-world experiments. 
AIGB-Pearl consistently delivers better results in GMV (\textbf{+3\%}), BuyCnt, and ROI, while maintaining Cost fluctuations within 2\% compared to the baselines.
This indicates that the proposed method has better generalization ability than AIGB. 

\vspace{-1mm}
\subsection{Ablation Study}
\label{sec:ablation_study}
\vspace{-1mm}

\textbf{To answer RQ(2):}
We remove the KL and Lipschitz constraints from AIGB-Pearl individually and evaluate the model's performance in each ablated variant using real-world A/B tests. The results are presented in Table \ref{tab:ablation_study}.
It can be seen that the KL constraint contributes \textbf{+1.1\%} improvement in GMV, and the Lipschitz constraint provides \textbf{+1.8\%} improvement in GMV, demonstrating their respective roles in enhancing AIGB-Pearl’s performance.

\begin{table*}[t]
    \centering
    \caption{Ablation Study. The effectiveness of the KL constraint and the Lipschitz constraint in
    Real-world A/B tests, involving 6k advertisers over 8 days. }
    \begin{adjustbox}{max width =1.0\linewidth}
    \footnotesize
    \begin{tabular}{c|cccc|c|cccc}
    \toprule
       AIGB-Pearl & GMV & BuyCnt & ROI & Cost & Methods & GMV & BuyCnt & ROI & Cost \\
    
    \midrule
       \textbf{w/o KL} & 30,906,963  & 292,605 & 4.25 & 7,269,018 & \textbf{w/o Lipschitz} & 32,284,972 & 268,551 & 5.73 & 5,634,304 \\
       \textbf{with KL} & {31,243,688} & {292,783}  & {4.26} & 7,342,485 & \textbf{with Lipschitz} & {32,869,329} & {281,979} & {5.79} & 5,678,252\\
       $\bm \Delta$ & \textbf{+1.09\%} & \textbf{+0.06\%} & \textbf{+0.08\%} & +1.01\% 
       & $\bm \Delta$ & \textbf{+1.81\%} & \textbf{+0.50\%} & \textbf{+1.05\%} & +0.78\% \\
        \bottomrule
    \end{tabular}
    \end{adjustbox}
    \label{tab:ablation_study}
    \vspace{-10pt}
\end{table*}

\textbf{Visualization.}
Three AIGB-Pearl-generated trajectory examples are presented in 
Fig. \ref{fig:visualization}.
As can be observed, the trajectories generated by AIGB-Pearl are plausible.
In contrast, the ablated variant without the KL and Lipschitz constraints produces trajectories that deviate significantly from the optimal trajectory in the offline dataset 
and exhibit clear pathological behaviors---such as excessive budget consumption, backward-trending pacing, and under-utilization of available budgets (see Appendix \ref{app:pathological_behaviors} for explanation)---which further validate the KL-Lipschitz constraint necessity. 

\begin{figure}[t]
  \centering
\includegraphics[width=1.0\linewidth]{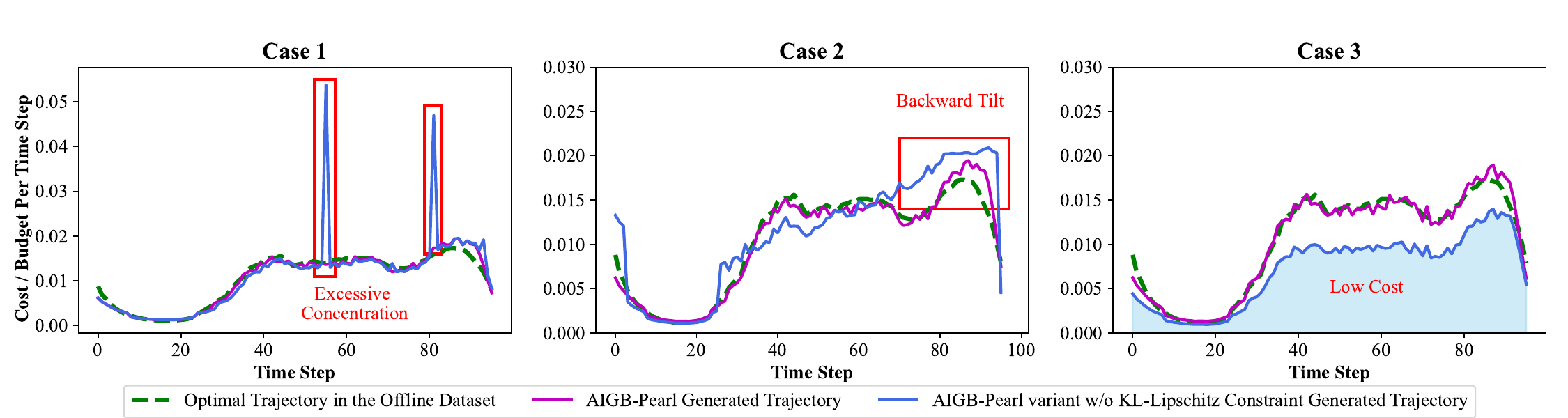}
  \caption{Trajectory Generation Visualization. Three cases are presented. Here, the  
  AIGB-Pearl generates plausible trajectories, whereas its variant without the KL-Lipschitz constraint produces generations that significantly deviate from the reference and exhibit evident issues.}
  \label{fig:visualization}
  \vspace{-1mm}
\end{figure}

\vspace{-1mm}
\setlength{\tabcolsep}{10pt}
\begin{table*}[t]
    \centering
    \caption{{Evaluator accuracy for simulated and real-world experiments. Results are reported for training data and OOD data evaluated using 5-fold cross-validation.}}
    \begin{adjustbox}{max width =1.0\linewidth}
    \begin{tabular}{ccc|ccc}
    \toprule
        \makecell[c]{\textbf{Simulated Exp}} & Training Data & OOD Data (Cross-Validation) & \makecell[c]{\textbf{Real-world Exp}} & Training Data & OOD Data (Cross-Validation) \\
    
    \midrule
       MAE $\downarrow$ & 0.6  & 0.7 $\pm$ 0.06 & MAE $\downarrow$& 1.0 & 1.2 $\pm$ 0.03\\
       AUC $\uparrow$ & 89.9\%  & 85.5\% $\pm$ 0.5\% & AUC $\uparrow$& 77.4\% & 75.1\% $\pm$ 0.2\%\\
        \bottomrule
    \end{tabular}
    \end{adjustbox}
    \label{tab:evaluator_accuracy}
    \vspace{-5pt}
    \vspace{-2mm}
\end{table*}

\vspace{-2mm}
\subsection{Lipschitz Value Examination}
\label{sec:exp_evaluator_accuracy}
\vspace{-2mm}

\begin{wrapfigure}{r}{0.5\textwidth}
\vspace{-4mm}
  \centering
  \begin{minipage}{0.48\linewidth}
    \includegraphics[width=\linewidth]{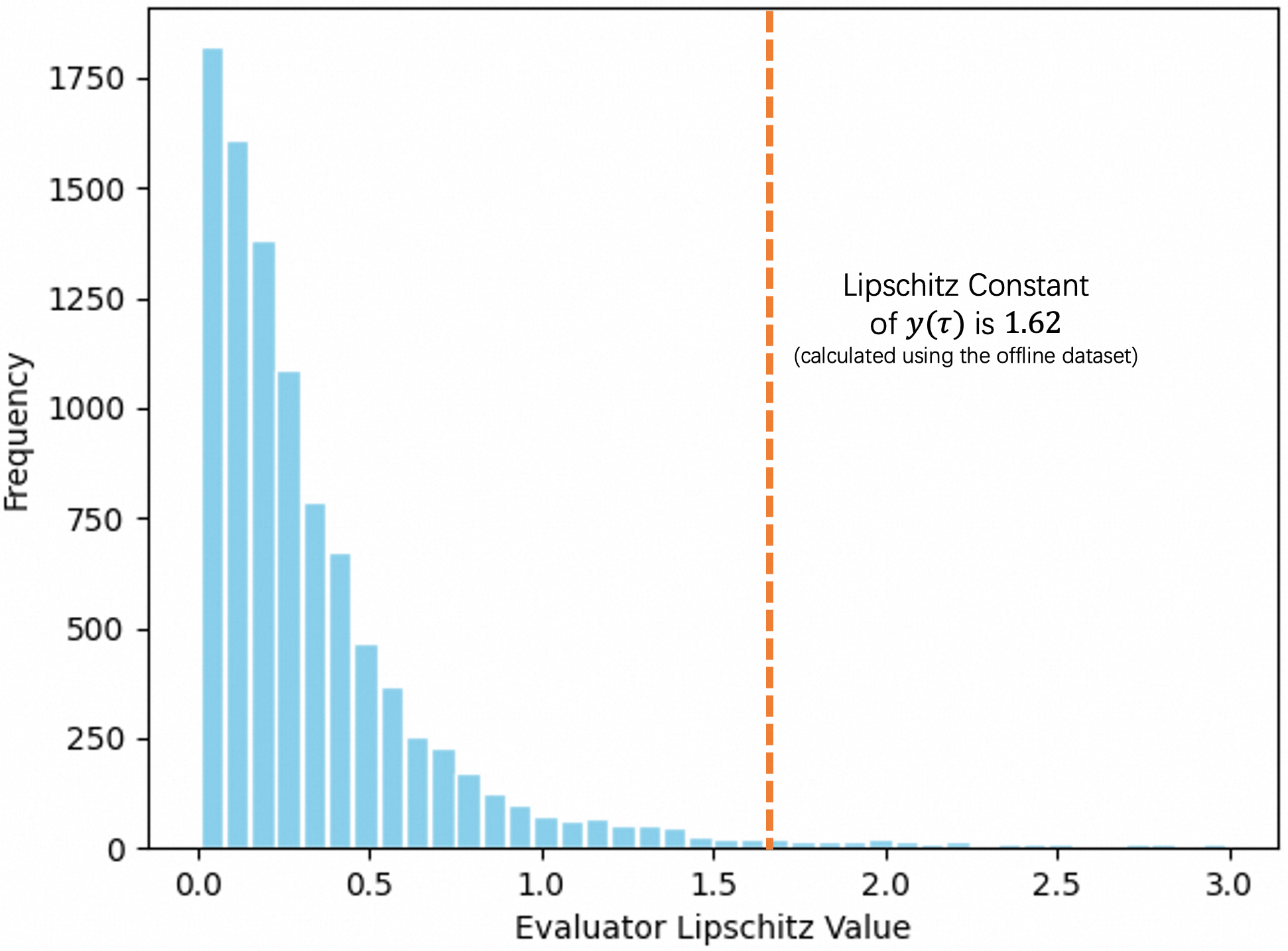}
    \caption{Examination of Evaluator Lipschitz.}
     \label{fig:evaluator_lipschitz}
  \end{minipage}
  \hfill
  \begin{minipage}{0.48\linewidth}
    \includegraphics[width=\linewidth]{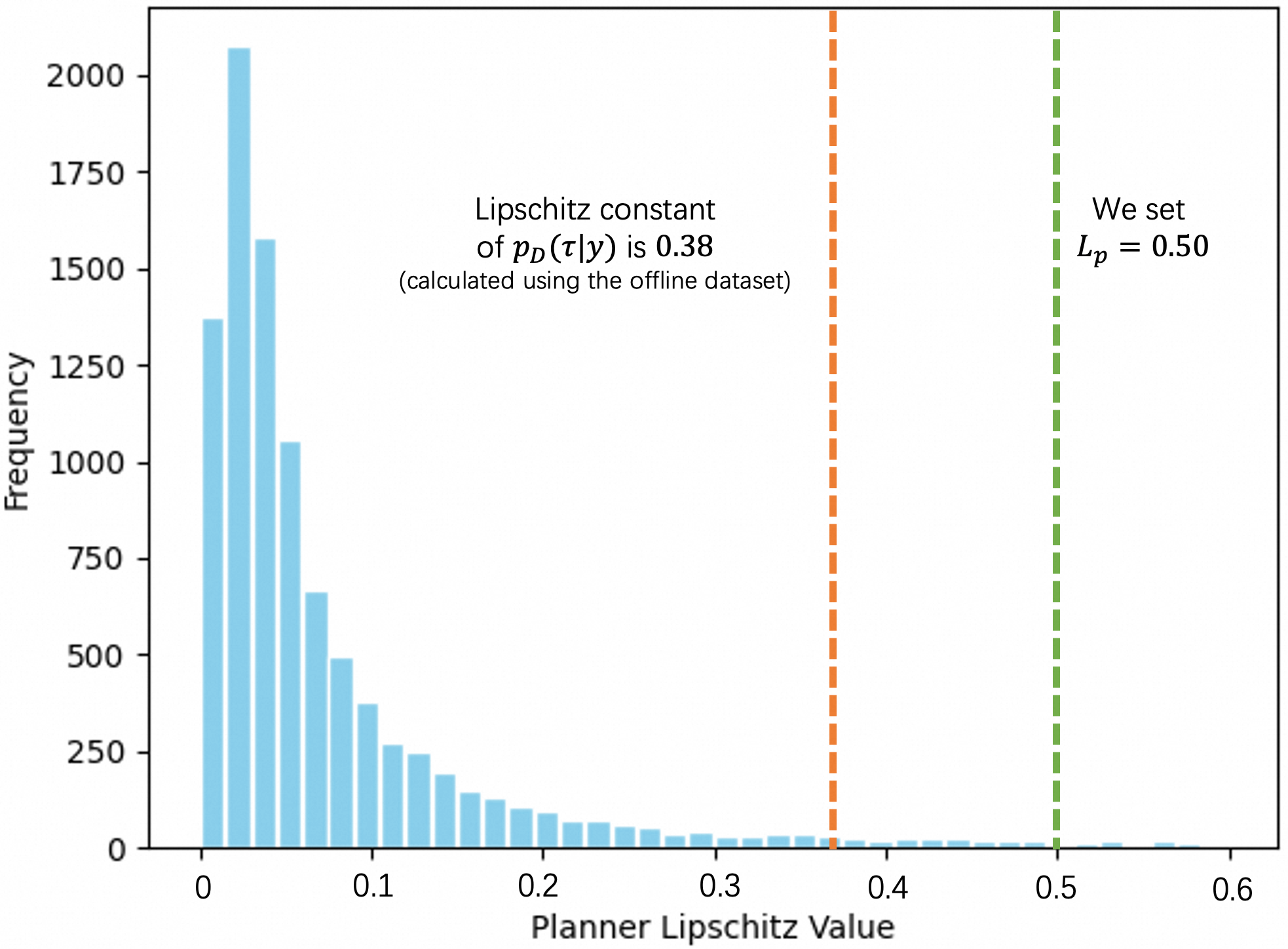}
    \caption{Examination of Planner Lipschitz.}
    \label{fig:planner_lipschitz}
  \end{minipage}
  \label{fig:sidebyside}
  \vspace{-10pt}
\end{wrapfigure}
\textbf{To answer RQ(3):}
We report that the Lipschitz value of the trajectory quality $y(\tau)$ and the conditional trajectory distribution $p_D$ of the offline dataset are $1.62$ and $0.38$, respectively. We set $L_p=0.50$, which is close to its lower bound estimate of $0.38$
\footnote{As the data-driven lower bound estimation can be underestimated, we slightly increase its value in practice.}.
To calculate the Lipschitz constants of the evaluator and the planner, we sample $8,000$ pairs of trajectories and compute their Lipschitz constants. The results are shown in Fig. \ref{fig:evaluator_lipschitz} and Fig. \ref{fig:planner_lipschitz}.  
It can be observed that most sample values satisfy the Lipschitz constraint, and the Lipschitz constants of the evaluator $\hat{y}_\phi(\tau)$ and planner $p_\theta(\tau|y)$ are $2.2$ and $0.56$, respectively, near $1.62$ and $0.50$. 
This indicates that the models' Lipschitz constraints are successfully satisfied.


\vspace{-3mm}
\subsection{Evaluator Accuracy Examination}
\label{sec:exp_evaluator_acc_examination}
\vspace{-2mm}

{\textbf{Accuracy Metrics.} The evaluator's accuracy is assessed along two dimensions, including the \emph{absolute accuracy} measured by mean absolute error (MAE) metrics, reflecting how close the predicted scores are to ground truth scores, and the \emph{ranking accuracy} by AUC metric, reflecting the correctness of relative rankings between trajectory pairs.  Note that the MAE of each advertiser’s data sample is normalized by its budget to ensure comparability across advertisers.
A lower MAE, together with a higher AUC, indicates better evaluator accuracy.}

{
\textbf{To answer RQ(4):} We report the accuracy of the trained evaluator in both simulated and real-world experiments in Table \ref{tab:evaluator_accuracy}. 
We evaluate the evaluator's accuracy on the training data and its generalization ability using $K$-fold cross-validation, where $K=5$. 
To the best of our knowledge, we are the first to introduce the trajectory evaluator into the generative auto-bidding framework.
The reasonableness of our evaluator is evidenced by its pairwise ranking accuracies of 86\% AUC and 75\% AUC on OOD trajectories in simulated and real-world experiments, respectively, substantially above the 50\% random-chance level, despite the high complexity and dynamic nature of the bidding environment. Importantly, with the guidance of the trained evaluator, the planner outperforms state-of-the-art AIGB methods even on OOD data, as demonstrated in the Table. \ref{tab:gene_exp_real}.
}


\vspace{-3mm}
\section{Conclusions}
\label{sec:conclusion}
\vspace{-3mm}

This paper proposes AIGB-Pearl, which enhances AIGB by incorporating reward evaluation and policy optimization.
By introducing a trajectory evaluator and a provably KL-Lipschitz-constrained score-maximization objective, our approach ensures safe and efficient generalization beyond the offline dataset, supported by theoretical guarantees.
Extensive simulated and real-world experiments validate the state-of-the-art performance of our approach.
\newpage


\bibliography{iclr2026_conference}
\bibliographystyle{iclr2026_conference}

\newpage
\tableofcontents

\newpage
\appendix
\section{Related Works}
\label{app:related_works}

\subsection{RL-based Auto-bidding Methods}
Auto-bidding plays a critical role in online advertising by automatically placing bids, allowing advertisers to participate efficiently in real-time auctions \citep{balseiro2021robust,deng2021towards,balseiro2021landscape}.
The auto-bidding problem can be modeled as a Markov Decision Process and addressed using reinforcement learning techniques. 
USCB \citep{uscb} proposes a unified solution to the constrained bidding problem, employing the RL method DDPG \citep{silver2014deterministic} to dynamically adjust parameters toward an optimal bidding strategy.
\citet{sorl} propose a sustainable online reinforcement learning framework that alternates between online exploration and offline training, thereby alleviating the sim2rel problem.
A few studies explore multi-agent RL for auto-bidding \citep{Jin_2018, guan2021multiagent, wen2022cooperative}, while several focus on budget allocation and bidding strategies in multi-channel scenarios using RL-based approaches \citep{wang2023hibid,deng2023multi,duan2025adaptable}.
Importantly, offline RL methods such as BCQ \citep{fujimoto2019off}, CQL \citep{kumar2020conservative}, IQL \citep{kostrikovoffline}, and MOPO \citep{yu2020mopo} have demonstrated significant potential in this domain.
These methods allow policy learning from pre-collected datasets without requiring online interaction.
Moreover, offline RL, such as Diffusion-QL \citep{wang2022diffusion}, employs a generative policy architecture to improve expressive capacity. 

However, RL-based methods often suffer from training instability arising from bootstrapping and alternating training between critics and actors.
Training instability typically deteriorates policy performance \citep{sutton1998reinforcement}. Moreover, training stability is even more critical in auto-bidding, given two domain-specific challenges: the absence of an accurate offline policy evaluation method and the high cost of online policy evaluation in a real-world advertising system \citep{sorl}.
Therefore, stable convergence to a well-performed policy is essential to ensure deployment reliability and system safety.

\subsection{Generative Auto-bidding Methods}
Generative models exhibit strong capabilities for capturing and reproducing underlying data distributions across a wide range of fields \citep{kingma2022,goodfellow2020generative,pan2023better,sohl2015deep,ho2020denoising,vaswani2017attention}.
They can be effectively incorporated into decision-making systems by generating complete trajectories that guide agents toward high-reward behaviors \citep{zhu2023diffusion,kang2023efficient,li2025generativemodelsdecisionmaking}.
In particular, Decision Transformer (DT) \citep{chen2021decision} reframes RL as a conditional sequence modeling problem and leverages transformer architectures to generate actions conditioned on desired returns, historical states, and actions. 
AIGB \citep{aigb} extends the generative perspective to the auto-bidding domain by formulating auto-bidding as a conditional generative modeling problem. 
DiffBid generates a state trajectory based on the desired return utilizing a conditional diffusion model, and then generates actions aligned with the optimized trajectory. 
These methods achieve superior performance in auto-bidding and offer distinct advantages over traditional RL methods. 
They do not rely on the bootstrapping mechanism commonly used in RL, thereby avoiding the instability caused by the deadly triad. 
Even so, these generative auto-bidding methods still face a performance bottleneck due to their neglect of fine-grained generation-quality evaluation and their inability to explore beyond static datasets.
In contrast, our method facilitates both reward evaluation and policy search through a learned trajectory evaluator.

\section{AIGB Method Details}
\label{app:aigb}
AIGB models the sequential decision-making problem via conditional diffusion, enabling effective trajectory generation for auto-bidding scenarios.
Specifically, AIGB utilizes the denoising diffusion probabilistic model (DDPM) \citep{ho2020denoising} for generation.
The forward and reverse processes are modeled as:
\begin{align}
    q(\tau_{k+1}|\tau_k), \quad p_\theta(\tau_k|\tau_{k+1}, y(\tau)),
\end{align}
respectively, where $q$ represents the forward noising process while $p_\theta$ the reverse denoising process. 

\textbf{Forward Process.}
In the forward process, the noise is gradually added to the latent variable by a Markov chain with pre-defined variance schedule $\beta_k$: 
\begin{align}
    q(\tau_k|\tau_{k-1})=\mathcal{N}(\tau_k;\sqrt{1-\beta_k}\tau_{k-1},\beta_kI)
\end{align}
where $k\in[K]$ refers to the diffusion step, $\tau_k\triangleq[s_1, s_2,\cdots,{s}_T]_k$ represents the latent variable in the $k$-th diffusion step, 
and $\tau_0$ is the original trajectory.
A notable property of the forward process is that $\tau_k$ at an arbitrary time-step $k$ can be sampled in closed form as:
\begin{equation}
    q(\tau_k|\tau_0)=\mathcal{N}(\tau_k;\sqrt{\bar\alpha_k}\tau_0,(1-\bar\alpha_k)I),
\end{equation}
where $\alpha_k=1-\beta_k$ and $\bar\alpha_k=\prod^k_{i=1}\alpha_k$.
When $k \rightarrow \infty$, $\tau_k$ approaches a standard Gaussian distribution.
In particular, AIGB employs a cosine noise schedule \citep{Alex2021improved} to control the schedule $\beta_k$.

\textbf{Reverse Process.}
In the reverse process, 
diffusion models aim to remove the added noise on $\tau_K$ and recursively recover $\tau_{0}$.
This process is governed by the conditional model $p_\theta(\tau_{k-1}|\tau_k,y(\tau))$, which is parameterized through a noise prediction model $\epsilon_\theta(\tau_k,y(\tau),k)$.
AIGB adopts a temporal U-Net \citep{ronneberger2015u} for the noise prediction model, a common choice in diffusion-based decision-making methods \citep{ajay2023is}.  

\subsection{Training Stage}
The training of the diffusion model is typically formulated as minimizing the mean squared error between the predicted noise $\epsilon_{\theta}$ and the true noise applied during the forward diffusion process. 
Specifically, during each iteration, we randomly sample a trajectory from the offline dataset $\mathcal{D}$ and pick a time step $t\in[T]$. We recursively add the Gaussian noise $\epsilon$ to the states in $\tau$ with time steps bigger than $t$ and predict the added noises with $\epsilon_\theta(\tau_k, y(\tau), k)$, where the states between $0$ and $t$ in $\tau_k$ are set to real history states $s_1,s_2,\cdots,s_t$. 
In addition to this standard objective, AIGB also incorporates a supervised loss that measures the discrepancy between the true actions and the actions predicted by an inverse dynamics model $\hat{f}_{\phi}(s_t, \hat{s}_{t+1})$.
Overall, the complete training objective of AIGB can be expressed as: 
\begin{equation}
    \mathcal{L}(\theta,\phi)=\mathbb{E}_{k,\tau\in\mathcal{D}}[||\epsilon-\epsilon_\theta(\tau_k,y(\tau),k)||^2]+\mathbb{E}_{(s_t,a_t,\hat{s}_{t+1})\in\mathcal{D}}[||a_t-\hat{f}_{\phi}(s_t, \hat{s}_{t+1})||^2].
\end{equation}
During training, the condition $y(\tau)$ is randomly dropped to enhance model robustness. 
This technique ensures that both the unconditional model $\epsilon_\theta(\tau_k,k)$ and the conditional model $\epsilon_\theta(\tau_k,y(\tau),k)$ are effectively trained together.

\subsection{Inference Stage}
Starting with Gaussian noise, trajectories are iteratively generated through a series of denoising steps.
Specifically, AIGB uses a classifier-free guidance strategy \citep{ho2021classifier} to guide the generation of bidding and extract high-likelihood trajectories in the dataset.
During  generation, AIGB combines conditional and unconditional  score estimates linearly: 
\begin{equation}
\hat{\epsilon}_k:=\epsilon_\theta(\tau_k,k)+\omega\left( \epsilon_\theta\left( \tau_k,y(\tau),k \right) - \epsilon_\theta \left ( \tau_k, k \right) \right),
\label{eq:perturbed_noise}
\end{equation}
where $\omega$ is the guidance scale that controls the influence of the condition $y(\tau)$. 
This formulation effectively steers the trajectory generation toward regions of the data distribution that are most consistent with the given condition.
The predicted state at each step is sampled from 
$p_\theta(\tau_{k-1}|\tau_k,y(\tau))$:
\begin{equation}
    \tau_{k-1}\sim\mathcal{N}\left (\tau_{k-1}|{\mu}_{\theta}\left(\tau_k,y(\tau),k\right),{\Sigma}_{\theta}\left(\tau_k,k\right ) \right ),
\end{equation}
with mean and variance defined as  ${\mu}_\theta(\tau_k,y(\tau),k)=\frac{1}{\sqrt{\alpha_k}}(\tau_k-\frac{\beta_k}{\sqrt{1-\overline{\alpha}_k}}\hat{\epsilon}_k )$ and $\Sigma_\theta(\cdot)=\beta_k$. 
Note that the initial noisy trajectory  $\tau'_K\sim\mathcal{N}(0,I)$ is assigned with history states $s_{1:t}$ for the first $t$ states to ensure history consistency.  This is consistent with the training process. 
By recursively applying the reverse diffusion process using:
\begin{equation}
    \tau'_{k-1}={\mu}_\theta(\tau'_k,y(\tau), k)+\sqrt{\beta_k}{z},
    \label{eq:sample}
\end{equation}
where $z\sim\mathcal{N}(0,I)$, we obtain the final denoised trajectory $\tau'_0$, from which the next state $\hat{s}_{t+1}$ is derived. 
Then the action is generated through an inverse dynamics $\hat{a}_t=\hat{f}_{\phi}(s_t, \hat{s}_{t+1})$.

\section{Theoretical Proofs}
\setcounter{theorem}{0}
\subsection{Proof of Theorem \ref{thm:lipschitz_continuous}.}
\label{app:proof_thm1}

\begin{figure}
  \centering
\includegraphics[width=0.9\linewidth]{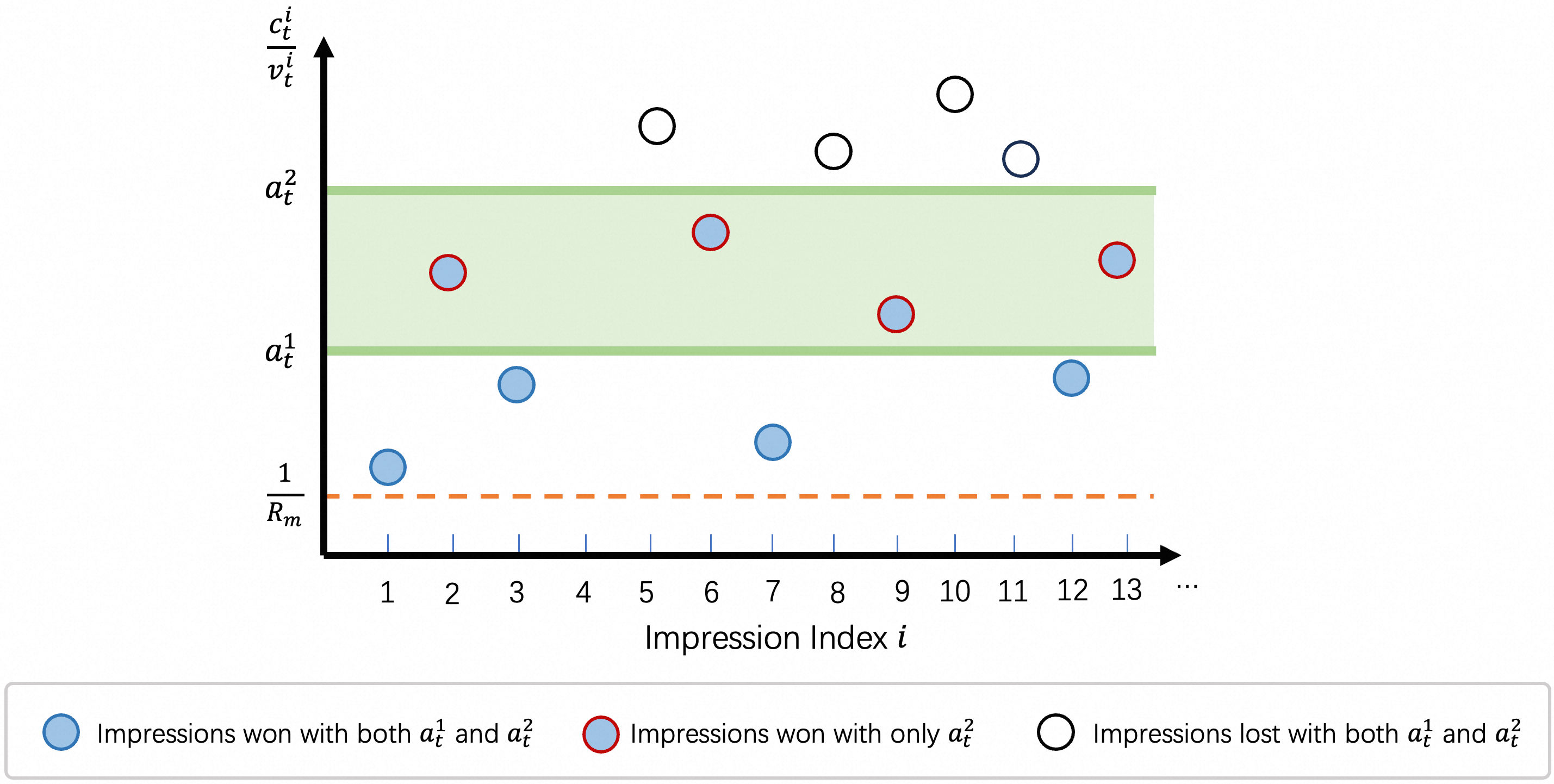}
  \caption{The impression opportunities within time step $t$ and $t+1$, where $p_t^i/v_t^i$ is the $1/\text{ROI}$ of impression $i$. Without loss of generality, consider two actions $a_{1,t}$ and $a_{2,t}$, and let $a_{2,t}\ge a_{1,t}$. The impressions within the shadow area are the impressions won by action $a_{2,t}$ but lost by action $a_{1,t}$.  
  }
  \label{fig:impressions}
\end{figure}

\begin{theorem}[Lipschitz Continuous of $y(\tau)$.]
The trajectory quality $y(\tau)$ is $\sqrt{T}R_m$-Lipschitz continuous with respect to the Frobenius norm.
\end{theorem}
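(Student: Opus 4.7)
The plan is to reduce the Lipschitz constant of $y(\tau)=\sum_{t=1}^T r_t$ with respect to $\tau=[s_1,\ldots,s_T]$ to a per-step Lipschitz estimate of $r_t$ with respect to the cost coordinate $c_t$ of $s_{t+1}$, and then convert an $\ell^1$ bound into a Frobenius bound via Cauchy--Schwarz.

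First, I would observe that under the fixed market-price/value configuration implicit in the MDP, $r_t$ depends on $\tau$ only through $c_t$ (the cost consumed between steps $t$ and $t+1$, which is encoded as a coordinate of $s_{t+1}$). The crucial per-step claim is
\[
|r_t(c_{1,t})-r_t(c_{2,t})|\le R_m\,|c_{1,t}-c_{2,t}|,
\]
which I would justify using the geometric picture in Figure~\ref{fig:impressions}. Without loss of generality assume $a_{2,t}\ge a_{1,t}$, so $a_{2,t}$ wins a superset of the impressions won by $a_{1,t}$. The additional won impressions satisfy $v_i/p_i\le R_m$, so their total value is at most $R_m$ times their total cost. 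Hence the incremental reward and the incremental cost between the two scalings are in ratio at most $R_m$, which gives the per-step Lipschitz bound as a function of $c_t$ regardless of the intermediate bid that realized that cost.

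Second, summing the per-step estimates gives
\[
|y(\tau_1)-y(\tau_2)|\;\le\;R_m\sum_{t=1}^{T}|c_{1,t}-c_{2,t}|\;=\;R_m\,\|\vc_1-\vc_2\|_1,
\]
where $\vc_i=(c_{i,1},\ldots,c_{i,T})$. Applying Cauchy--Schwarz yields $\|\vc_1-\vc_2\|_1\le \sqrt{T}\,\|\vc_1-\vc_2\|_2$. Finally, since $c_t$ is a single coordinate of $s_{t+1}$, dropping the remaining coordinates can only decrease the norm, so $\|\vc_1-\vc_2\|_2\le \|\tau_1-\tau_2\|_F$. Combining the three inequalities gives $|y(\tau_1)-y(\tau_2)|\le \sqrt{T}\,R_m\,\|\tau_1-\tau_2\|_F$, the desired bound.

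The main obstacle I expect is the justification of the per-step claim. The state coordinates $t$ and $x$ in $s_t$ carry no dynamical content for $r_t$, and the remaining coordinate $c_{t-1}$ only indexes the budget already consumed, so the only non-trivial dependence is through $c_t$. The subtlety is that $c_t$ does not, in general, determine a unique bid (different market realizations yield different cost curves), but the bound $\Delta r \le R_m\,\Delta c$ holds along the monotone ``increase the bid'' direction by the ROI cap, and the symmetric direction follows by swapping the roles of $\tau_1,\tau_2$. I would make this rigorous by noting that $r_t$ is a monotone, piecewise-linear function of the scaling factor whose slope (as a function of consumed cost) never exceeds $R_m$, so the same inequality transfers to the $c_t$-parameterisation. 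All other steps are mechanical and require only standard inequalities.
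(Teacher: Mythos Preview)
Your proposal is correct and follows essentially the same approach as the paper: both establish the per-step bound $|r_{1,t}-r_{2,t}|\le R_m|c_{1,t}-c_{2,t}|$ by comparing the impressions won under two bids via the ROI cap, sum over $t$, and then pass from $\ell^1$ to the Frobenius norm via Cauchy--Schwarz. The only cosmetic difference is that the paper computes $\|\tau_1-\tau_2\|_F$ exactly (the $t$ and $x$ coordinates cancel), whereas you use the slightly weaker but equally valid observation that dropping non-cost coordinates can only decrease the norm.
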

\begin{proof} 
Recall from Section \ref{sec:prob_state} that the cost $c_t$ and reward $r_t$ under action $a_t$ between time step $t$ and $t+1$ can be written as:
\begin{align}
    c_t=\sum_{i}\mathbbm{1}\bigg\{a_t\ge \frac{p_t^i}{v_t^i}\bigg\}p_t^i\quad\text{and}\quad r_t=\sum_{i}\mathbbm{1}\bigg\{a_t\ge \frac{p_t^i}{v_t^i}\bigg\}v_t^i,
\end{align}
where $p_t^i$ and $v_t^i$ denote the market price and the value of the $i$-th impression between time step $t$ and $t+1$. Accordingly, the cost ratio $\bar{c}_t$ and the normalized reward $\bar{r}_t$ can be written as:
\begin{align}
     \bar{c}_t=\frac{1}{B}\sum_{i}\mathbbm{1}\bigg\{a_t\ge \frac{p_t^i}{v_t^i}\bigg\}p_t^i\quad\text{and}\quad \bar{r}_t=\frac{1}{B}\sum_{i}\mathbbm{1}\bigg\{a_t\ge \frac{p_t^i}{v_t^i}\bigg\}v_t^i,
\end{align}
Consider two different trajectories $\tau_1$ and $\tau_2$ with actions, cost ratios and normalized rewards sequences $\{a_{1,t}, \bar{c}_{1,t}, \bar{r}_{1,t}\}_{t=1}^T$ and $\{a_{2,t}, \bar{c}_{2,t}, \bar{r}_{2,t}\}_{t=1}^T$, respectively. 
The trajectory quality gap between $\tau_1$ and $\tau_2$ holds that:
\begin{align}
\label{equ:le_r}
    |y(\tau_1)-y(\tau_2)|=\big|\sum_t \bar{r}_{1,t}-\sum_t\bar{r}_{2,t}\big|\le \sum_{t}|\bar{r}_{1,t}-\bar{r}_{2,t}|.
\end{align}
Consider the reward gap between time step $t$ and $t+1$, as shown in Fig.\ref{fig:impressions}.
Without loss of generality, let $a_{2,t}\ge a_{1,t}$. We have:
\begin{align}
\label{equ:r_in}
    |\bar{r}_{1,t}-\bar{r}_{2,t}|&=\frac{1}{B}\sum_i\bigg[\mathbbm{1}\bigg\{a_{2,t}\ge \frac{p_t^i}{v_t^i}\bigg\} - \mathbbm{1}\bigg\{a_{1,t}\ge \frac{p_t^i}{v_t^i}\bigg\} \bigg] v_t^i\notag\\
    &=\frac{1}{B}\sum_i \mathbbm{1}\bigg\{a_{2,t}\ge \frac{p_t^i}{v_t^i} \ge a_{1,t}\bigg\}v_t^i\notag\\
    &=\frac{1}{B}\sum_i \mathbbm{1}\bigg\{a_{2,t}\ge \frac{p_t^i}{v_t^i} \ge a_{1,t}\bigg\}\frac{v_t^i}{p_t^i}p_t^i\notag\\
    &\le \frac{R_m}{B} \sum_i \mathbbm{1}\bigg\{a_{2,t}\ge \frac{p_t^i}{v_t^i} \ge a_{1,t}\bigg\}p_t^i.
\end{align}
Note that the cost ratio gap between time step $t$ and $t+1$ can be written as:
\begin{align}
\label{equ:c_in}
    |\bar{c}_{1,t}-\bar{c}_{2,t}|=\frac{1}{B}\sum_i\bigg[\mathbbm{1}\bigg\{a_{2,t}\ge \frac{p_t^i}{v_t^i}\bigg\} - \mathbbm{1}\bigg\{a_{1,t}\ge \frac{p_t^i}{v_t^i}\bigg\} \bigg] p_t^i=\frac{1}{B}\sum_i \mathbbm{1}\bigg\{a_{2,t}\ge \frac{p_t^i}{v_t^i} \ge a_{1,t}\bigg\}p_t^i.
\end{align}
Therefore, combining Eq. \ref{equ:r_in} and Eq. \ref{equ:c_in}, we have:
\begin{align}
\label{equ:r_c}
    |\bar{r}_{1,t}-\bar{r}_{2,t}| \le R_m |\bar{c}_{1,t}-\bar{c}_{2,t}|.
\end{align}
We examine the Frobenius norm of the gap between $\tau_1$ and $\tau_2$:
\begin{align}
\label{equ:cs}
    \|\tau_1-\tau_2\|_F&=\bigg\|\begin{bmatrix}1&\bar{c}_{1,0}& x\\2&\bar{c}_{1,1}& x\\ 
    \vdots&\vdots&\vdots\\
    T&c_{1,T-1}& x\end{bmatrix}-\begin{bmatrix}1&\bar{c}_{2,0}& x\\2&\bar{c}_{2,1}& x\\ 
    \vdots&\vdots&\vdots\\
    T&\bar{c}_{2,T-1}& x\end{bmatrix}\bigg\|_F\notag\\
    &=\sqrt{\sum_{t}(\bar{c}_{1,t}-\bar{c}_{2,t})^2}\notag\\
    &\ge \frac{1}{\sqrt{T}}\sum_t |\bar{c}_{1,t}-\bar{c}_{2,t}|\qquad \qquad\text{(Cauchy-Schwarz Inequality)}
\end{align}
Combining Eq. \ref{equ:le_r}, Eq. \ref{equ:r_c} and Eq. \ref{equ:cs}, we can obtain that:
\begin{align}
    |y(\tau_1)-y(\tau_2)|&\le \sum_t|\bar{r}_{1,t}-\bar{r}_{2,t}|\notag\\&\le R_m\sum_t|\bar{c}_{1,t}-\bar{c}_{2,t}|\notag\\
    &\le \sqrt{T}R_m\frac{1}{\sqrt{T}}\sum_t|\bar{c}_{1,t}-\bar{c}_{2,t}|\notag\\
    &\le \sqrt{T}R_m \left\|\tau_1-\tau_2\right\|_F.
\end{align}
This concludes the proof.
\end{proof}

\subsection{Proof of Theorem \ref{thm:performance_gap_bound}}
\label{app:proof_thm2}

Here, we list two lemmas used in the proof of Theorem \ref{thm:performance_gap_bound}.

\begin{lemma}[Additivity of the Lipschitz]
\label{lemma:lipschitz}
    Let $f_1(x)$ and $f_2(x)$ be two Lipschitz continuous functions with Lipschitz constants $L_1>0$ and $L_2>0$, respectively. Then $|f_1(x)+f_2(x)|$ is also a Lipschitz continuous function, with Lipschitz constant at most $L_1+L_2$.    
\end{lemma}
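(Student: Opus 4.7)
The plan is to invoke the triangle inequality twice: once to split the sum into the two Lipschitz pieces, and once in its reverse form to strip the outer absolute value. Since the statement is an elementary fact about Lipschitz constants, no deep machinery is required.

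First I would pick two arbitrary points $x, y$ in the common domain and examine the plain sum $f_1(x)+f_2(x) - f_1(y) - f_2(y)$. Rewriting this as $(f_1(x)-f_1(y)) + (f_2(x)-f_2(y))$ and applying the ordinary triangle inequality yields
\begin{align}
|f_1(x)+f_2(x) - f_1(y) - f_2(y)| &\le |f_1(x)-f_1(y)| + |f_2(x)-f_2(y)| \notag \\
&\le L_1 |x-y| + L_2 |x-y| = (L_1+L_2)|x-y|,
\end{align}
where the second line uses the $L_i$-Lipschitz hypothesis on each summand.

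Next I would handle the outer absolute value. The reverse triangle inequality $\bigl||a| - |b|\bigr| \le |a-b|$, applied with $a = f_1(x)+f_2(x)$ and $b = f_1(y)+f_2(y)$, gives
\begin{align}
\bigl| \, |f_1(x)+f_2(x)| - |f_1(y)+f_2(y)| \, \bigr| \;\le\; |f_1(x)+f_2(x) - f_1(y) - f_2(y)|.
\end{align}
Chaining this with the previous bound yields the desired $(L_1+L_2)$-Lipschitz estimate for $|f_1+f_2|$, which matches the claim.

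There is no real obstacle here: both inequalities invoked are completely standard, and the constant $L_1+L_2$ is tight in general. The only subtle point worth flagging is that the outer absolute value does not inflate the Lipschitz constant, and this is exactly what the reverse triangle inequality certifies in one line.
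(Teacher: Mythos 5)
Your proof is correct and follows essentially the same route as the paper's: the reverse triangle inequality to strip the outer absolute value, the ordinary triangle inequality to split the sum, and the two Lipschitz hypotheses to conclude. The only cosmetic difference is that you establish the inner bound first and then chain with the reverse triangle inequality, whereas the paper writes the chain top-down in a single display; the substance is identical.
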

\begin{proof}
Recall the Reverse Triangle Inequality states that $\forall a,b$, we have $||a|-|b||\le |a-b|$.
Then, $\forall x,y$, we have:
\begin{align}
    ||f_1(x)+f_2(x)|-|f_1(y)+f_2(y)||&\le |f_1(x)+f_2(x)-f_1(y)-f_2(y)|\notag\\
    &\le |f_1(x)-f_1(y)|+|f_2(x)-f_2(y)|\notag\\
    &\le (L_1+L_2)|x-y|.
\end{align}
This concludes the proof.
\end{proof}

\begin{lemma}[Kantorovich-Rubinstein Duality Theorem \citep{villani2021topics}]
\label{lemma:KR}
Let $(X, d)$ be a metric space, and let $p$ and $q$ be two probability distributions on $X$. 
Let $f:X\rightarrow \mathbb{R}$ be an $L$-Lipschitz function, and $W_1(p,q)$ denotes the 1-Wasserstein distance between $p$ and $q$. Then we have:
\begin{align}
    |\mathbb{E}_{x\sim p} f(x)-\mathbb{E}_{x\sim q} f(x)|\le L\cdot W_1(p,q).
\end{align}
\end{lemma}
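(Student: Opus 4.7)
The plan is to prove the bound directly from the primal (coupling) definition of $W_1$, which only requires the ``easy'' direction of Kantorovich--Rubinstein duality. The full equality form of the duality, which characterizes $W_1$ as a supremum over 1-Lipschitz test functions, is not needed here since the lemma asserts only one inequality. So I would bypass the supremum-equals-infimum machinery and work with couplings directly.

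First, I would recall that $W_1(p,q) = \inf_{\gamma \in \Gamma(p,q)} \int_{X \times X} d(x,y)\, d\gamma(x,y)$, where $\Gamma(p,q)$ denotes the set of all couplings of $p$ and $q$, i.e. probability measures on $X \times X$ whose marginals are $p$ and $q$. For any such $\gamma$, the marginal property gives $\mathbb{E}_{x \sim p} f(x) = \int f(x)\, d\gamma(x,y)$ and $\mathbb{E}_{x \sim q} f(x) = \int f(y)\, d\gamma(x,y)$. Subtracting these identities and applying the triangle inequality for integrals,
$$\left| \mathbb{E}_{x \sim p} f(x) - \mathbb{E}_{x \sim q} f(x) \right| = \left| \int [f(x) - f(y)]\, d\gamma(x,y) \right| \le \int |f(x) - f(y)|\, d\gamma(x,y).$$
The $L$-Lipschitz hypothesis then yields the pointwise bound $|f(x) - f(y)| \le L \cdot d(x,y)$, so the right-hand side is at most $L \int d(x,y)\, d\gamma(x,y)$. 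Since $\gamma \in \Gamma(p,q)$ was arbitrary, I would conclude by taking the infimum over all couplings, which by definition equals $L \cdot W_1(p,q)$.

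There is no substantive obstacle; the only caveat is a mild integrability check (finite first moments of $p$ and $q$ together with the $L$-Lipschitz, hence at most linearly growing, $f$) to ensure the integrals are well defined and that a coupling achieving or approximating the infimum exists. These are standard under the tacit regularity assumed in the paper. A natural alternative route would cite the dual form $W_1(p,q) = \sup_{g : \mathrm{Lip}(g) \le 1} |\mathbb{E}_p g - \mathbb{E}_q g|$ and plug in $g = f/L$, but this invokes a far deeper theorem than the coupling argument used above and is unnecessary for the one-sided inequality the lemma states.
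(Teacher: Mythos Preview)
Your proof is correct. The paper does not actually supply its own proof of this lemma; it simply states the result and attributes it to Villani's textbook, so there is no paper-side argument to compare against. Your coupling argument is the standard elementary proof of the one-sided inequality, and your observation that the full dual characterization $W_1(p,q)=\sup_{\mathrm{Lip}(g)\le 1}|\mathbb{E}_p g-\mathbb{E}_q g|$ is overkill here is well taken.
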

We next give the proof of Theorem \ref{thm:performance_gap_bound}.

\begin{theorem}[Evaluator Bias in Planning Performance Bound]
Let the upper bound of the evaluator's bias on its training dataset $\mathcal{D}$ be $\delta_D>0$. 
The gap between the planner's score $L(\theta)$ and its true performance $J(\theta)$ can be bounded by:
\begin{align}
   |J(\theta)-L(\theta)|\le \delta_D +(1+k)\sqrt{T}R_m\mathbb{E}_{y\sim p_D(y)}\bigg[\underbrace{W_1(p_\theta(\tau|y^*), p_\theta(\tau|y))}_{\text{ Lipschitz sensitivity to $y$}} + \underbrace{W_1(p_\theta(\tau|y), p_D(\tau|y))}_{\text{imitation error on $\mathcal{D}$}}\bigg],\notag
\end{align}
where $W_1$ denotes the 1-Wasserstein distance.
\end{theorem}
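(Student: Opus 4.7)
The plan is to rewrite the gap as an expectation of the single Lipschitz map $f(\tau)\triangleq y(\tau)-\hat{y}_\phi(\tau)$ under the planner's distribution, and then transport that expectation to the offline dataset through two intermediate measures so that Kantorovich--Rubinstein duality produces exactly the two Wasserstein terms in the statement.

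First I would observe that Theorem \ref{thm:lipschitz_continuous} gives $y$ the Lipschitz constant $\sqrt{T}R_m$, while by construction (Section \ref{sec:evaluator_design}) the evaluator $\hat{y}_\phi$ has Lipschitz constant $k\sqrt{T}R_m$. Lemma \ref{lemma:lipschitz} then implies that $f$ is $(1+k)\sqrt{T}R_m$-Lipschitz. Rewriting the quantity of interest as
\begin{align*}
|J(\theta)-L(\theta)| = \big|\mathbb{E}_{\tau\sim p_\theta(\tau|y^*)}[f(\tau)]\big|
\end{align*}
reduces everything to bounding a single expectation of one Lipschitz function, which is the natural setting for Kantorovich--Rubinstein.

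Next I would introduce two intermediate measures to reach $p_D$, where the evaluator's error is explicitly controlled. Using that the inner expectation is constant in $y$, so $\mathbb{E}_{\tau\sim p_\theta(\tau|y^*)}[f(\tau)]=\mathbb{E}_{y\sim p_D(y)}\mathbb{E}_{\tau\sim p_\theta(\tau|y^*)}[f(\tau)]$, I would apply a three-term telescoping and the triangle inequality:
\begin{align*}
\big|\mathbb{E}_{\tau\sim p_\theta(\tau|y^*)}[f(\tau)]\big|
&\le \mathbb{E}_{y\sim p_D(y)}\Big|\mathbb{E}_{\tau\sim p_\theta(\tau|y^*)}[f(\tau)]-\mathbb{E}_{\tau\sim p_\theta(\tau|y)}[f(\tau)]\Big|\\
&\quad+\mathbb{E}_{y\sim p_D(y)}\Big|\mathbb{E}_{\tau\sim p_\theta(\tau|y)}[f(\tau)]-\mathbb{E}_{\tau\sim p_D(\tau|y)}[f(\tau)]\Big|\\
&\quad+\Big|\mathbb{E}_{(\tau,y)\sim \mathcal{D}}[f(\tau)]\Big|.
\end{align*}
Invoking Kantorovich--Rubinstein duality (Lemma \ref{lemma:KR}) on the first two summands with Lipschitz constant $(1+k)\sqrt{T}R_m$ immediately produces the two Wasserstein terms with the right prefactor, and the residual $|\mathbb{E}_{(\tau,y)\sim \mathcal{D}}[f(\tau)]|$ is bounded by $\delta_D$ by the definition of the evaluator's bias on $\mathcal{D}$.

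The main obstacle I anticipate is purely one of bookkeeping in the telescoping step: the outer expectation over $y\sim p_D(y)$ must be introduced and pulled outside the absolute values via the triangle inequality \emph{before} Kantorovich--Rubinstein is applied, so that the duality can be used pointwise in $y$ against the conditional Wasserstein distances appearing in the statement. One also needs to verify that the Lipschitz constant of $f$ is sample-independent so that it factors out of the outer expectation, which is exactly what Lemma \ref{lemma:lipschitz} guarantees. Modulo this bookkeeping, the argument is a routine assembly of Theorem \ref{thm:lipschitz_continuous}, Lemma \ref{lemma:lipschitz}, and Lemma \ref{lemma:KR}.
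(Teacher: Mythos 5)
Your argument is correct and mirrors the paper's proof step for step: both rest on the Lipschitz property of the evaluator bias $f$, a telescope through the offline conditional distribution, Kantorovich--Rubinstein duality, and the bound $\delta_D$ on the dataset bias. The only cosmetic difference is that you introduce the intermediate measure $p_\theta(\tau\mid y)$ directly in the expectation telescope, whereas the paper first obtains $\delta_D + (1+k)\sqrt{T}R_m\,\mathbb{E}_{y\sim p_D(y)}\big[W_1(p_\theta(\tau\mid y^*),p_D(\tau\mid y))\big]$ and then splits that single Wasserstein term via the $W_1$ triangle inequality---an algebraically identical rearrangement.
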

\begin{proof}
The evaluator bias in the planner's performance can be written as:
\begin{align}
     |J(\theta)-L(\theta)| = |\mathbb{E}_{\tau\sim p_\theta(\tau | y^*)}[y(\tau)-\hat{y}_\phi(\tau)]|\le \mathbb{E}_{\tau\sim p_\theta(\tau | y^*)}\underbrace{|y(\tau)-\hat{y}_\phi(\tau)|}_{\triangleq f(\tau)}
\end{align}
Let $f(\tau)\triangleq |y(\tau)-\hat{y}_\phi(\tau)|$ be the evaluator bias in trajectory $\tau$. 
From Theorem \ref{thm:lipschitz_continuous} and Lemma \ref{lemma:lipschitz}, we know that $f(\tau)$ is a $(1+k)\sqrt{T}R_m$-Lipschitz continuous function.
Then, we have:
\begin{align}
\label{equ:bias}
    |J(\theta)-L(\theta)|&\le\mathbb{E}_{\tau\sim p_\theta(\tau| y^*)} f(\tau)\notag\\
    &=\mathbb{E}_{y\sim p_D(y)}\bigg[\mathbb{E}_{\tau\sim p_\theta(\tau| y^*)} f(\tau)-\mathbb{E}_{\tau\sim p_D(\tau|y)}f(\tau)+\mathbb{E}_{\tau\sim p_D(\tau|y)}f(\tau)\bigg]\notag\\
    &=\underbrace{\mathbb{E}_{y\sim p_D(y)}\mathbb{E}_{\tau\sim p_D(\tau|y)}f(\tau)}_{\le \delta_D} + \mathbb{E}_{y\sim p_D(y)}\bigg[\mathbb{E}_{\tau\sim p_\theta(\tau| y^*)} f(\tau)-\mathbb{E}_{\tau\sim p_D(\tau|y)}f(\tau)\bigg]\notag\\
    &=\delta_D + \mathbb{E}_{y\sim p_D(y)}\underbrace{\bigg[\mathbb{E}_{\tau\sim p_\theta(\tau| y^*)} f(\tau) - \mathbb{E}_{\tau\sim p_\theta(\tau| y)} f(\tau) \bigg ]}_{\le (1+k)\sqrt{T}R_mW_1(p_\theta(\tau|y^*), p_\theta(\tau|y)), \text{ (Lemma \ref{lemma:KR})}}\notag\\
    &\qquad\quad+ \mathbb{E}_{y\sim p_D(\tau)}\underbrace{\bigg[ \mathbb{E}_{\tau\sim p_\theta(\tau| y)} f(\tau) -\mathbb{E}_{\tau\sim p_D(\tau|y)}f(\tau)\bigg]}_{\le (1+k)\sqrt{T}R_mW_1(p_\theta(\tau|y), p_D(\tau|y)), \text{ (Lemma \ref{lemma:KR})}}\notag\\
    &\le \delta_D +(1+k)\sqrt{T}R_m\mathbb{E}_{y\sim p_D(y)}[W_1(p_\theta(\tau|y^*), p_\theta(\tau|y))] \notag\\
    &\qquad\quad+ (1+k)\sqrt{T}R_m\mathbb{E}_{y\sim p_D(y)}[W_1(p_\theta(\tau|y), p_D(\tau|y))].
\end{align}
Therefore, we have:
\begin{align}
   |J(\theta)-L(\theta)|\le \delta_D +(1+k)\sqrt{T}R_m\mathbb{E}_{y\sim p_D(y)}\bigg[W_1(p_\theta(\tau|y^*), p_\theta(\tau|y)) + W_1(p_\theta(\tau|y), p_D(\tau|y))\bigg].
\end{align}
This concludes the proof.
\end{proof}

\subsection{Proof of Eq. \ref{equ:lipschiz_planner_bound}}
\label{app:proof_lipschitz_planner}
We give the proof of Eq. \ref{equ:lipschiz_planner_bound} as follows. 
Denote $\text{Lip}_{W_1}(p_\theta(\tau|y))$ as the planner's Lipschitz constant with respect to $y$ regarding the Wasserstein distance $W_1$, we have:
\begin{align}
     \mathbb{E}_{y\sim p_D(y)}[W_1(p_\theta(\tau|y^*), p_\theta(\tau|y))]&\le \text{Lip}_{W_1}(p_\theta(\tau|y))\mathbb{E}_{y\sim p_D(y)}[ ((1+\epsilon)y_m-y)]\notag \\
    &=\text{Lip}_{W_1}(p_\theta(\tau|y))\int_{0}^{y_m} p_D(y)[(1+\epsilon)y_m - y] \mathrm{d}y\notag\\
    &\le \text{Lip}_{W_1}(p_\theta(\tau|y)) \int_0^{y_m}p_D(y)[(1+\epsilon)y_m ] \mathrm{d}y\notag\\
    &= (1+\epsilon)y_m\text{Lip}_{W_1}(p_\theta(\tau|y)),
\end{align}
where we leverage the non-negativity property of the condition $y\ge 0, \forall y\in\mathcal{D}$.
This completes the proof.

\subsection{Proof of Eq. \ref{equ:kl_bound}}
\label{app:proof_kl_bound}

\begin{lemma}[Pinsker's Inequality \citep{Tsybakov2008introduction}]
\label{lemma:pinkser}
Let $P$ and $Q$ be two probability measures defined on the same measurable space, and assume that $P$ is absolutely continuous with respect to $Q$, i.e., $P\ll Q$. Then the total variation distance between $P$ and $Q$ is bounded above by the KL divergence from $P$ to $Q$ as follows:
\begin{align}
    \|P-Q\|_\text{TV}\le\sqrt{\frac{1}{2}D_{KL}(P\|Q)}.
\end{align}
\end{lemma}

\begin{lemma}[Wasserstein–Total Variation Inequality on Bounded Metric Spaces \citep{villani2008optimal}]
\label{lemma:W_1_TV}
Let $(\mathcal{Z}, d)$ be a metric space with diameter $\text{diam}(\mathcal{Z})\triangleq\sup_{z_1,z_2\in\mathcal{Z}}d(z_1,z_2)$. Let $P$ and $Q$ be two probability measures on $\mathcal{Z}$. Then the 1-Wasserstein distance between $P$ and $Q$ satisfies:
\begin{align}
    W_1(P,Q)\le \text{diam}(\mathcal{Z})\|P-Q\|_\text{TV}.
\end{align}
    
\end{lemma}

We give the proof of Eq. \ref{equ:kl_bound} as follows.
Equipped with the above two lemmas, we have:
\begin{align}
\label{equ:W_1_TV_KL}
    W_1(p_\theta(\tau|y), p_D(\tau|y))&\le \text{diam}(\mathcal{T})\|p_\theta(\tau|y)-p_D(\tau|y)\|_\text{TV}\notag\\
    &\le \text{diam}(\mathcal{T})\sqrt{\frac{1}{2}D_{KL}(p_D(\tau|y)\|p_\theta(\tau|y))},
\end{align}
where $\mathcal{T}$ is the trajectory space. Note that due to the budget constraint $\sum_t c_t\le B$ \footnote{As explained in Footnote 1, the budget constraint is guaranteed to be satisfied in real-world advertising systems thanks to an automatic suspension mechanism that halts bidding once the budget is exhausted.}, we have the sum of the cost ratio satisfies $\sum_t \bar{c}_t\le 1$.
The trajectory space can be expressed as:
\begin{align}
    \mathcal{T}=\bigg\{\bigg[[1, \bar{c}_{0},x], [2,\bar{c}_{1}, x], \cdots,[T, \bar{c}_{T-1}, x]\bigg]\bigg| \bar{c}_t\ge 0, \forall t, \text{and} \sum_t \bar{c}_t\le 1\bigg\}
\end{align}
We next prove that the diameter of the trajectory space, $\text{diam}(\mathcal{T})$, can be bounded by a constant. Specifically, the diameter only depends on the largest possible distance between the cost ratio sequences in two trajectories since:
\begin{align}
    \text{diam}(\mathcal{T})&=\sup_{\tau_1,\tau_2\in \mathcal{T}} \|\tau_1-\tau_2\|_F\notag\\
    &=\sup_{\tau_1,\tau_2\in \mathcal{T}} \bigg\|\begin{bmatrix}1&\bar{c}_{1,0}& x\\2&\bar{c}_{1,1}& x\\ 
    \vdots&\vdots&\vdots\\
    T&c_{1,T-1}& x\end{bmatrix}-\begin{bmatrix}1&\bar{c}_{2,0}& x\\2&\bar{c}_{2,1}& x\\ 
    \vdots&\vdots&\vdots\\
    T&\bar{c}_{2,T-1}& x\end{bmatrix}\bigg\|_F\notag\\
    &=\sup_{\tau_1,\tau_2\in \mathcal{T}}\sqrt{\sum_{t}(\bar{c}_{1,t}-\bar{c}_{2,t})^2}.
\end{align}
For convenience, we let $\mathbf{c}_i\triangleq [\bar{c}_{i,0}, \bar{c}_{i,2}, \cdots, \bar{c}_{i,T-1}], i\in\{1,2\}$. Then, the key part in the above result, $\sum_{t}(\bar{c}_{1,t}-\bar{c}_{2,t})^2$, can be written as:
\begin{align}
\label{equ:inner_product}
    \sum_{t}(\bar{c}_{1,t}-\bar{c}_{2,t})^2&=\sum_t(\bar{c}_{1,t}^2-2\bar{c}_{1,t}\bar{c}_{2,t}+\bar{c}_{2,t}^2)\notag\\
    &=\|\mathbf{c}_1\|^2_2+\|\mathbf{c}_2\|^2_2-2\langle\mathbf{c}_1,\mathbf{c}_2\rangle\notag\\
    &\le \|\mathbf{c}_1\|^2_2+\|\mathbf{c}_2\|^2_2,
\end{align}
where $\langle\mathbf{c}_1,\mathbf{c}_2\rangle\ge 0$.
As $\bar{c}_{i,t}\ge0$ and $\sum_{t}\bar{c}_{i,t}\le 1$, we have $0\le \bar{c}_{i,t}\le 1$. Therefore, it holds that:
\begin{align}
\label{equ:zero_to_one}
\|\mathbf{c}_i\|^2_2=\sum_{t}\bar{c}_{i,t}^2\le \sum_t \bar{c}_{i,t}\le 1.
\end{align}
Combining Eq. \ref{equ:inner_product} and Eq. \ref{equ:zero_to_one}, we have:
\begin{align}
    \sqrt{\sum_{t}(\bar{c}_{1,t}-\bar{c}_{2,t})^2}\le \sqrt{\|\mathbf{c}_1\|^2_2+\|\mathbf{c}_2\|^2_2}\le \sqrt{2}.
\end{align}
Therefore, we have $\text{diam}(\mathcal{T})=\sqrt{2}$. According to Eq. \ref{equ:W_1_TV_KL}, we have:
\begin{align}
\label{equ:W_1_KL}
     W_1(p_\theta(\tau|y), p_D(\tau|y))\le \sqrt{D_{KL}(p_D(\tau|y)\|p_\theta(\tau|y))}.
\end{align}
Recall that we impose the KL-constraint as: 
\begin{align}
\label{equ:KL_constraint}
    \mathbb{E}_{y\sim p_D(y)}[D_\text{KL}(p_D(\tau|y) \| p_\theta(\tau|y))]\le \delta_K,
\end{align}
Taking the expectation over $y\sim p_D(y)$ on both sides of Eq. \ref{equ:W_1_KL}, we have:
\begin{align}
\label{equ:W_2}
     \mathbb{E}_{y\sim p_D(y)}[W_1(p_\theta(\tau|y), p_D(\tau|y))]&\le \mathbb{E}_{y\sim p_D(y)}\bigg[\sqrt{ D_{KL}(p_D(\tau|y) \| p_\theta(\tau|y))}\bigg]\notag\\
     &\le \sqrt{\mathbb{E}_{y\sim p_D(y)}[ D_{KL}(p_D(\tau|y) \| p_\theta(\tau|y))]} \quad\text{(Jensen Inequality)}\notag\\
     &\le \sqrt{\delta_K}\qquad\qquad\qquad\qquad\qquad\qquad\qquad\text{(KL  constraint Eq.\ref{equ:KL_constraint})}.
\end{align}
This completes the proof.

\subsection{Proof of Theorem \ref{thm:sub_optimality}}
\label{app:proof_suboptimality}
\begin{theorem}[Sub-optimality Gap Bound] 
Let $\delta_M\triangleq \mathbb{E}_{y\sim p_D(y)}[D_{KL}(p_D(\tau|y)\|p_{\theta^*}(\tau|y))]$ be the expected distance between the optimal trajectory distribution and the trajectory distribution of the offline dataset $\mathcal{D}$.
The true performance gap between the optimal parameter $\theta^*$ and the solution $\hat{\theta}$ to Eq. \ref{equ:max_R_constrained}  is bounded by:
\begin{align}
    J(\theta^*) - J(\hat{\theta})\le 2\delta_D+(1+2k)\sqrt{T}R_m\bigg[\sqrt{\delta_M}+\sqrt{\delta_K}+(1+\epsilon)y_mL_p\bigg].
\end{align}
\end{theorem}
\begin{proof}
    The sub-optimality gap can be expressed as follows:
    \begin{align}
    \label{equ:decompose}
        J(\theta^*)-J(\hat{\theta}) &= \big(J(\theta^*)-L(\theta^*)\big) + \big(L(\theta^*) - L(\hat{\theta})\big)+\big( L(\hat{\theta})-J(\hat{\theta})\big)\notag\\
        &\le \underbrace{|J(\theta^*)-L(\theta^*)| }_{\text{evaluator bias in }p_{\theta^*}}
        + \underbrace{|L(\theta^*)-L(\hat{\theta})|}_{\text{score gap}}
        + \underbrace{|L(\hat{\theta})-J(\hat{\theta})|}_{\text{evaluator bias in } p_{\hat{\theta}}}.
    \end{align}
We examine the above three terms accordingly.

\textbf{(1) Evaluator Bias in $p_{\theta^*}$.} Denote the evaluator bias on trajectory $\tau$ as $f(\tau)\triangleq |y(\tau)-\hat{y}_\phi(\tau)|$. Following the derivation process in Eq. \ref{equ:bias}, we have:
\begin{align}
     |J(\theta^*)-L(\theta^*)|
    &\le \mathbb{E}_{y\sim p_D(y)}\mathbb{E}_{\tau\sim p_D(\tau|y)}f(\tau) + \mathbb{E}_{y\sim p_D(y)}\bigg[\mathbb{E}_{\tau\sim p_{\theta^*}(\tau| y^*)} f(\tau)-\mathbb{E}_{\tau\sim p_D(\tau|y)}f(\tau)\bigg]\notag\\
    &\le \delta_D +(1+k)\sqrt{T}R_m\mathbb{E}_{y\sim p_D(y)}[W_1(p_{\theta^*}(\tau|y^*), p_D(\tau|y))],
\end{align}
where $W_1(p_{\theta^*}(\tau|y^*), p_D(\tau|y))$ denotes the probability distribution distance between the optimal planner and the offline dataset.
Based on the derivation in Appendix \ref{app:proof_kl_bound}, we have:
\begin{align}
\label{equ:W_1_theta*}
    \mathbb{E}_{y\sim p_D(y)}[W_1(p_{\theta^*}(\tau|y^*), p_D(\tau|y))]\le \sqrt{\mathbb{E}_{y\sim p_D(y)}[D_{KL}(p_D(\tau|y) \| p_{\theta^*}(\tau|y^*) )]}
\end{align}
Let $\delta_M\triangleq \mathbb{E}_{y\sim p_D(y)}[D_{KL}(p_D(\tau|y) \| p_{\theta^*}(\tau|y^*) )]$ be the distance between the optimal trajectory distribution and the offline dataset trajectory distribution. We have:
\begin{align}
\label{equ:gap_1}
    |J(\theta^*)-L(\theta^*)| \le \delta_D +(1+k)\sqrt{T}R_m\sqrt{\delta_M}.
\end{align}

\textbf{(2) Score Gap.} Recall that the trained evaluator $\hat{y}_\phi(\tau)$ is a $k\sqrt{T}R_m$-Lipschitz continuous function with the Lipschitz constraint design. With Lemma \ref{lemma:KR}, we have:
\begin{align}
\label{equ:gap_2}
    |L(\theta^*)-L(\hat{\theta})|&=|\mathbb{E}_{\tau\sim p_{\theta^*}(\tau|y^*)}\hat{y}_\phi(\tau)-\mathbb{E}_{\tau\sim p_{\hat{\theta}}(\tau|y^*)}\hat{y}_\phi(\tau)|\notag\\
    &\le  k\sqrt{T}R_m W_1(p_{\theta^*}(\tau|y^*), p_{\hat{\theta}}(\tau|y^*))\notag\\
    &\le k\sqrt{T}R_m \mathbb{E}_{y\sim p_D(y)}\bigg[W_1(p_{\theta^*}(\tau|y^*), p_D(\tau|y)) +W_1(p_D(\tau|y), p_{\hat{\theta}}(\tau | y^*))\bigg]\notag\\
    & \le k\sqrt{T}R_m\bigg[\sqrt{\delta_M}+(1+\epsilon)y_mL_p + \sqrt{\delta_K}\bigg],
\end{align}
where we leverage the fact that $\hat{\theta}$ is a solution to Eq. \ref{equ:max_R_constrained} which satisfies the KL and Lipschitz constraint, and we leverage the results in Eq. \ref{equ:lipschiz_planner_bound}, Eq. \ref{equ:kl_bound} and Eq. \ref{equ:W_1_theta*}.

\textbf{(3) Evaluator Bias in $p_{\hat{\theta}}$. }
Since $\hat{\theta}$ satisfies the KL and Lipschitz constraint in Eq. \ref{equ:max_R_constrained}, we can use the results in Theorem \ref{thm:performance_gap_bound}, Eq. \ref{equ:lipschiz_planner_bound} and Eq. \ref{equ:kl_bound} to obtain:
\begin{align}
\label{equ:gap_3}
 |J(\hat{\theta})-L(\hat{\theta})|\le \delta_D +(1+k)\sqrt{T}R_m[(1+\epsilon)y_mL_p+\sqrt{\delta_K}],
\end{align}

Overall, combining the results in Eq. \ref{equ:gap_1}, Eq. \ref{equ:gap_2}, and Eq. \ref{equ:gap_3}, we have:
\begin{align}
    J(\theta^*) - J(\hat{\theta})\le 2\delta_D+(1+2k)\sqrt{T}R_m\bigg[\sqrt{\delta_M}+\sqrt{\delta_K}+(1+\epsilon)y_mL_p\bigg].
\end{align}
This concludes the proof.

\end{proof}

\subsection{Proof of Score Gradient}
\label{app:proof_score_gradient}
The probability of the trajectory generated by the Causal Transformer can be decomposed into:
\begin{align}
    p_\theta(s_{1:T}|y) = \Pi_t p_\theta(s_t|s_{1:t-1},y).
\end{align}
Then, we have:
\begin{align}
\label{equ:proof_pg}
    \nabla_\theta L(\theta)
    &=\nabla_\theta\int_{\tau}p_\theta(\tau|y^*)\hat{y}_\phi(\tau)\mathrm{d}\tau\notag\\
&=\nabla_\theta\int_{s_1,\cdots,s_{T}}p_\theta(s_{1:T},\tau|y^*)\hat{y}_\phi(\tau)\mathrm{d}s_1\cdots\mathrm{d}s_T\notag\\
    &=\int_{s_{1:T}}p_\theta(s_{1:T}|y^*)\frac{\nabla_\theta p_\theta(s_{1:T}|y^*)}{p_\theta(s_{1:T}|y^*)}\hat{y}_\phi(\tau)\mathrm{d}s_1\cdots\mathrm{d}s_T\notag\\
    &=\mathbb{E}_{s_{1:T}\sim p_\theta(s_{1:T}|y^*)}\bigg[\nabla_\theta\log p_\theta(s_{1:T}|y^*) \hat{y}_\phi(\tau)\bigg]\notag\\
    &=\mathbb{E}_{s_{1:T}\sim p_\theta(s_{1:T}|y^*)}\bigg[\nabla_\theta\log \Pi_{t} p_\theta(s_{t}|s_{1:t-1},y^*) \hat{y}_\phi(\tau)\bigg]\notag\\
    &=\mathbb{E}_{s_{1:T}\sim p_\theta(s_{1:T}|y^*)}\bigg[\sum_{t}\nabla_\theta\log p_\theta(s_{t}|s_{1:t-1},y^*) \hat{y}_\phi(\tau)\bigg].
\end{align}

\begin{figure}
    \centering
    \includegraphics[width=0.7\linewidth]{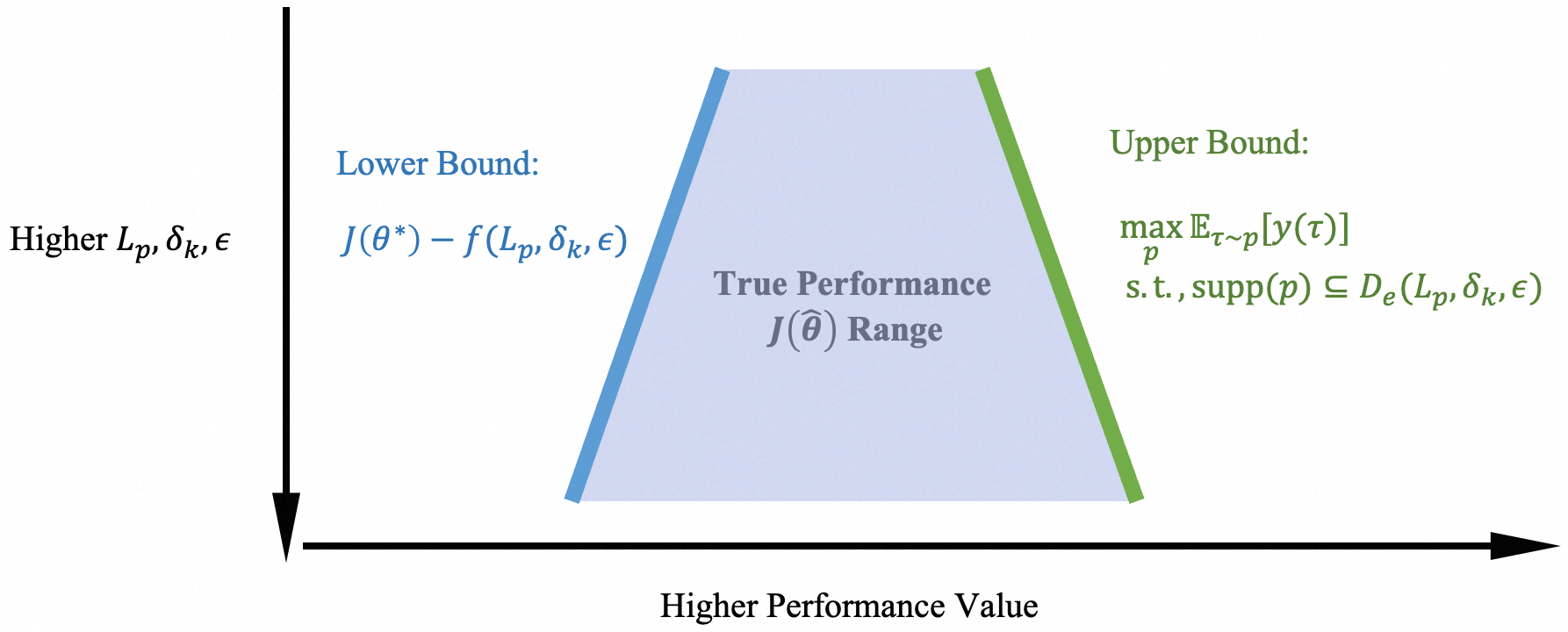}
    \caption{Performance range of the planner with respect to hyper-parameters $\delta_k, L_p$ and $\epsilon$. There is a trade-off in the selection of hyper-parameters: larger $\delta_k,L_p,\epsilon$ results in a smaller lower bound but a higher upper bound.}
    \label{fig:performance_range}
\end{figure}

\section{Theoretical Performance Range Tradeoff Discussion}
\label{app:theory_discussion}
We note that Theorem \ref{thm:sub_optimality} actually gives the lower bound of the planner's true performance $J(\hat{\theta})$, i.e.,
\begin{align}
    J(\theta^*)-\bigg(\underbrace{2\delta_D+(1+2k)\sqrt{T}R_m\bigg[\sqrt{\delta_M}+\sqrt{\delta_K}+(1+\epsilon)y_mL_p\bigg]\bigg)}_{\triangleq f(\delta_K, L_p,\epsilon)}\le J(\hat{\theta}),
\end{align}
where we denote that gap term $f$ as a function of hyper-parameters $\delta_K, L_p$ and $\epsilon$ that we can adjust. 
As $f(\delta_K, L_p,\epsilon)$ is monotonically increasing with respect to $\delta_K, L_p$ and $\epsilon$, smaller values of these terms result in a higher lower bound of $J(\hat{\theta})$. 
However, as illustrated in Fig. \ref{fig:kl_lipschitz}, these three terms also determine the planner's exploration range, denoted as $D_e(\delta_k,L_p, \epsilon)$. Specifically, higher values of $\delta_k,L_p, \epsilon$ indicate a larger exploration range, which thereby leads to a higher performance upper bound of the planner, i.e., 
\begin{align}
    J(\hat{\theta})\le \max_{p(\tau)} \mathbb{E}_{\tau\sim p(\tau)}[y(\tau)] \quad\mathrm{s.t.,}\;\; \text{supp}(p(\tau))\subseteq D_e(\delta_k, L_p, \epsilon).
\end{align}
This introduces a trade-off in the selection of hyper-parameters.
As illustrated in Fig. \ref{fig:performance_range}, higher $\delta_k,L_p,\epsilon$ results in a smaller lower bound but a higher upper bound. 
To this end, we conduct hyper-parameter tuning and give hyper-parameter determination in Appendix \ref{app:hyper_parameter_sensitivity}.


\section{AIGB-Pearl Algorithm Summary}
\label{app:aigb_pearl}

Algorithm \ref{algo} summarizes the training process of AIGB-Pearl.
Specifically, we compute the Lipschitz constants of $y(\tau)$ and $p_D(\tau|y)$ using the offline dataset $\mathcal{D}$ and, accordingly, determine the Lipschitz constraints for the evaluator and the planner, respectively.
Then, we sequentially perform evaluator learning, planner pretraining, and KL-Lipschitz-constrained score maximization for the planner.
The development of AIGB-Pearl is supported by ROLL \citep{wang2025reinforcement}. 

\subsection{Additional Designs For Evaluator Accuracy Enhancement}
\label{app:aigb_pearl_evaluator}
To further enhance the reliability of the trajectory evaluator, we design two additional techniques for the evaluator learning. Specifically, as described in the following, we (i) integrate an LLM embedding into its input feature for better representational capacity; and (ii) employ the pair-wise loss for better score estimation accuracy.
Fig. \ref{fig:aigb_pearl_evaluator} illustrates the complete evaluator learning losses.
The effectiveness of these two methods is studied in Appendix \ref{app:evaluator_accracy}.

\subsubsection{LLM Embedding Enhancement}
Motivated by the success of integrating user-specific features into recommendation systems \citep{chen2015recommender}, we incorporate advertiser-specific features into the trajectory evaluator to enhance its representational capacity and improve scoring accuracy.
Note that some advertiser-specific features are textual in nature—such as product titles, categories, and reviews—and are therefore difficult to incorporate directly into the vectorized trajectory $\tau$.
To address this, we construct a \emph{prompt} containing all such textual attributes and employ a pre-trained large language model (LLM), QWen2.5-1.5B-Instruct, with general world knowledge to extract $T$ representation, which we refer to as the \emph{LLM embedding}. 
Specifically, we use the output of the last hidden layer as the LLM embedding. 
This embedding is then used as an additional positional encoding in the evaluator.
The prompt template is given below: 

\begin{tcolorbox}[colframe=black,colback=gray!5,boxrule=0.5mm]
\footnotesize
\textbf{LLM Prompt Template.} 
I am an [\emph{advertising platform}] advertiser, operating the  [\emph{brand name}] brand in the category of  [\emph{category name}], classified as a  [\emph{advertiser tier}] tier advertiser.
I have a product titled [\emph{product name}] currently running in advertising campaigns. 
This product belongs to the leaf category of [\emph{leaf category}], priced at [\emph{product price}], and is positioned in the [\emph{price range}].
Its price ranks within the top  [\emph{price ranking in the leaf category}] \% in the leaf category.

\vspace{2mm}
Historical Average Performance:
The product generates an average of [\emph{average daily transactions}] daily transactions, with a GMV of [\emph{average daily GMV}], driven by advertising.
It receives an average [\emph{average daily impressions}] daily impressions from search and recommendation traffic,  [\emph{click numbers}] clicks,  [\emph{average daily BuyCnt}] BuyCnt, and a GMV of  [\emph{GMV}].
It ranks within the top  [\emph{sales volume ranking in leaf category}] \% in sales volume in the leaf category, with an average transaction value of  [\emph{average transaction value}].

\vspace{2mm}
Historical Time-based Average Performance:
This product has undergone continuous exposure to advertising for \emph{number of advertising days} days.
The average hourly advertising spend distribution per day (from 0:00 to 24:00) is  \emph{historical spend distribution sequence}.
The average GMV distribution across its category during this period is  \emph{historical GMV distribution sequence}.
The average daily advertising budget is  \emph{daily advertising budget}.
\end{tcolorbox}

\subsubsection{Pair-wise Loss}
Unlike human feedback scores used in LLM post-training \citep{rlhf}---which can suffer from subjective biases in their absolute valuations---the trajectory quality $y(\tau)$ has real physical meaning and is comparable among different trajectories. 
Therefore, we can adopt a hybrid point-wise and pair-wise loss for the score to capture the absolute value of  $y(\tau)$ and their relative preference, respectively.
This approach has demonstrated superior performance in recommender systems \citep{cao2007learning,lei2017alternating,wang2022mp2}.
Specifically, the pair-wise loss function can be expressed as:
\begin{align}
\label{equ:point_pair_loss}
   \mathbb{E}_{(\tau_w,\tau_l)\sim \mathcal{D}_p}\bigg[\log\sigma(\hat{y}_{\phi}(\tau_w)-\hat{y}_{\phi}(\tau_l))\bigg],
\end{align}
where the pair-wise loss is implemented by the typical Bradley-Terry (BT) function \citep{bradley1952rank}.
Here, $\mathcal{D}_p=\{(\tau_w, \tau_l, y(\tau_w), y(\tau_l))\}$ denotes a pair-wise dataset extracted from the offline dataset $\mathcal{D}$, where $\tau_w$ denotes the trajectory with higher trajectory quality, i.e., $y(\tau_w)\ge y(\tau_l)$.

\subsubsection{Overall Evaluator Loss}
Combining the previous techniques, the overall evaluator loss is:
\begin{align}
\label{equ:overall_evaluator_loss}
l_e(\phi)&= \underbrace{\mathbb{E}_{\tau\sim \mathcal{D}}\bigg[(\hat{y}_{\phi}(\tau)-y(\tau))^2\bigg]}_{\text{point-wise loss}}+\beta_4\underbrace{\mathbb{E}_{(\tau_w,\tau_l)\sim \mathcal{D}_p}\bigg[\log\sigma(\hat{y}_{\phi}(\tau_w)-\hat{y}_{\phi}(\tau_l))\bigg]}_{\text{pair-wise loss}}\notag\\
&+\beta_1\underbrace{\mathbb{E}_{\tau_1,\tau_2}\bigg[|\hat{y}_\phi(\tau_1)-\hat{y}_\phi(\tau_2)|-\sqrt{T}R_m\left\|\tau_1-\tau_2\right\|_F\bigg]_+}_{\text{Lipschitz penalty}},
\end{align}
where $\beta_4>0$ is a hyper-parameter.

\begin{figure}[t]
  \centering
\includegraphics[width=0.95\linewidth]{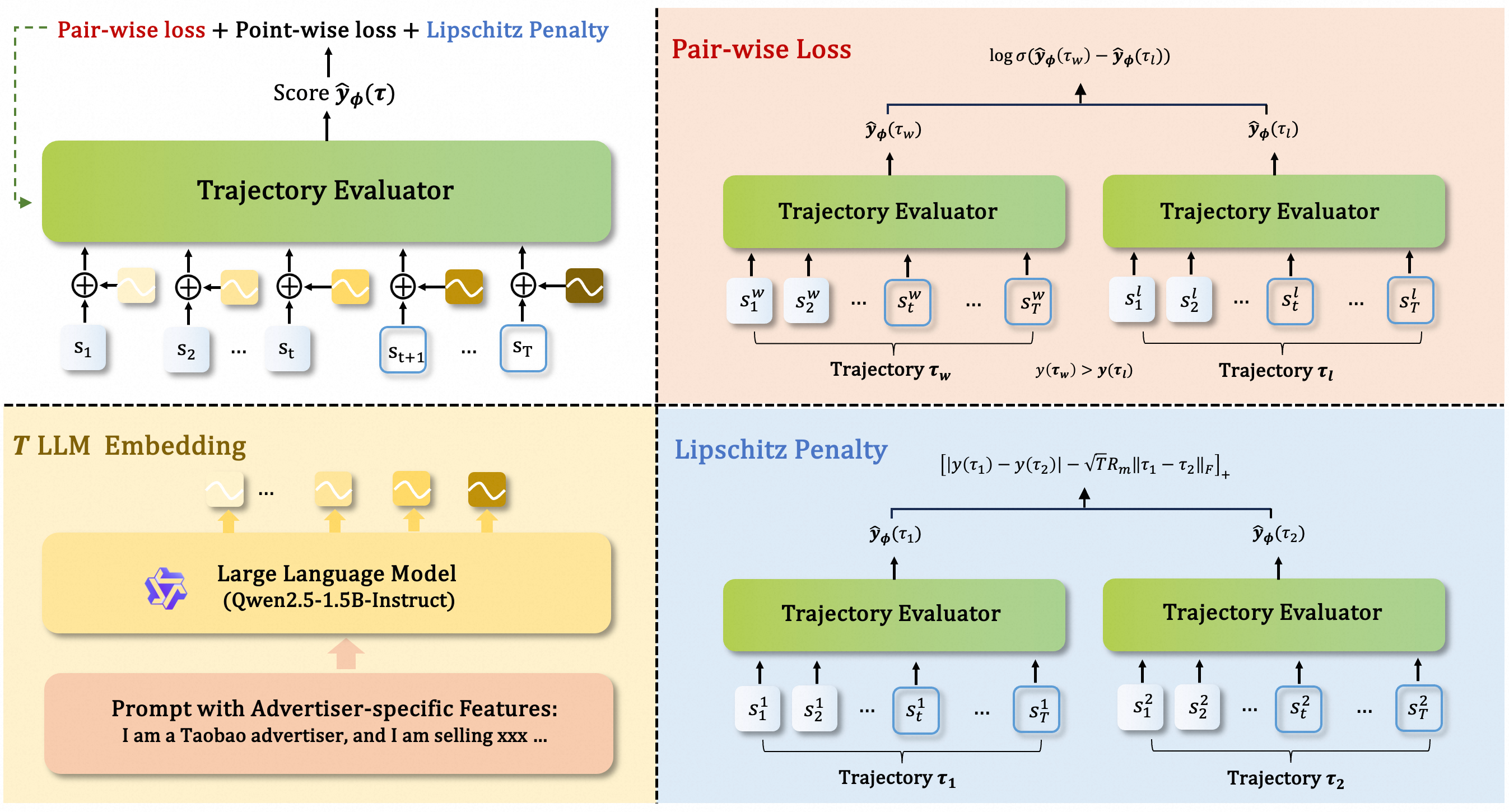}
  \caption{\textbf{AIGB-Pearl Evaluator Learning.} The evaluator loss is composed of three parts, including the point-wise loss, the pair-wise loss, and the Lipschitz penalty.
  The evaluator's input representation is augmented with an LLM embedding to incorporate semantic information from the advertiser's textual features.
  }
  \label{fig:aigb_pearl_evaluator}
\end{figure}

\begin{algorithm}[t]
\SetAlgoLined
\SetKwInOut{Input}{Input}
\SetKwInOut{Output}{Output}
\SetKwInOut{Initialization}{Initialization}
\Input{Offline dataset $\mathcal{D}$, desired condition $y^*$, hyper-parameters $\beta_1$, $\beta_2, \beta_3$.}
\Output{Optimized ${\theta}$ and $\phi$}
\Initialization{randomly initialized  planner parameter ${\theta}$, trajectory evaluator parameters {${\phi}$}}

\tcp{\textcolor{blue}
{\text{Determining the Lipschitz Value}}}

Calculate the Lipschitz value of $y(\tau)$ and $p_D(\tau|y)$ using the offline dataset $\mathcal{D}$.

Set the Lipschitz constraint value $L_e$ for the evaluator and $L_p$ for the planner to be bigger than the Lipschitz value of $y(\tau)$ and $p_D(\tau|y)$, respectively.

\tcp{\textcolor{blue}{\text{Training the trajectory evaluator}}}

\While{not converged}{

Update $\phi$ by minimizing Eq. \ref{equ:evaluator_loss};
}

\tcp{\textcolor{blue}{\text{Training the generative planner}}}
Warm start with pretrained planner $p_\theta$;

\While{not converged}{
Generate bidding trajectories $\tau\sim p_\theta(\tau|y^*)$;

Score generated trajectories with frozen ${\phi}$:
$\hat{y}_\phi(\tau)$;

Update $\theta$ by maximizing Eq. \ref{equ:planner_loss}.
}

\caption{AIGB-Pearl (Planning with EvaluAtor via RL)}
\label{algo}
\end{algorithm}

\section{Additional Experiments}
\label{app:exp}
\subsection{Simulated Experiment Settings}
\label{app:exp_simulate}

We include the detailed simulated experiment settings in Table \ref{app_tab:sim_environment}.
Specifically, we consider the bidding process in a day, where the bidding 
episode is divided into 96 time steps. Thus, the duration between
two adjacent time steps $t$ and $t+1$ is 15 minutes. The number
of impression opportunities between time steps $t$ and $t+1$ fluctuates
from 100 to 500.  
The {minimum and maximum} budgets of advertisers are 1000 Yuan and 4000 Yuan, respectively. 
The upper bound of the bid price is 10 Yuan, and the values of impressions {are positive}. 

\textbf{Hardware Resource.}
The simulated experiments are conducted based on an NVIDIA T4 Tensor Core GPU. We use 10 CPUs and 200G memory.

\begin{table}[t]
    \centering
     \caption{Settings of the simulated experiments.}
    \begin{tabular}{l|c}
    \toprule
       Parameters  & Values \\
    \midrule
       Number of advertisers  & 30 \\
       Time steps in an episode, $T$ & 96 \\
       Minimum number of impressions within a time step & 50 \\
       Maximum number of impressions  within a time step & 300 \\
       Minimum budget & 1000 Yuan \\
       Maximum budget & 4000 Yuan \\
       Value of impressions  & {$>0$} \\
       Minimum bid price, $\min\{av_i\}$ & 0 Yuan \\ 
       Maximum bid price, $\max\{av_i\}$ & 10 Yuan \\
       Maximum market price, $p_M$ & 10 Yuan \\
    \bottomrule
    \end{tabular}
    \label{app_tab:sim_environment}
\end{table}

\begin{table}[t]
    \centering
     \caption{Settings of the real-world experiments.}
    \begin{tabular}{l|c}
    \toprule
       Parameters  & Values \\
    \midrule
       Number of advertisers  & 6,000 \\
       Time steps in an episode, $T$ & 96 \\
       Minimum number of impressions  within a time step & 100 \\
       Maximum number of impressions within a time step & 2,500 \\
       Minimum budget & 50 Yuan \\
       Maximum budget & 10,000 Yuan \\
       Value of impressions  & {$>0$} \\
       Minimum bid price, $\min\{av_i\}$ & 0 Yuan \\ 
       Maximum bid price, $\max\{av_i\}$ & 25 Yuan \\
       Maximum market price, $p_M$ & 25 Yuan \\
    \bottomrule
    \end{tabular}
    \label{app_tab:real_environment}
\end{table}

\subsection{Real-world Experiment Settings}
\label{app:exp_real_world}
We include the detailed real-world experiment settings in Table \ref{app_tab:real_environment}.
Specifically, we consider the bidding process in a day, where the bidding 
episode is divided into 96 time steps. Thus, the duration between
two adjacent time steps $t$ and $t+1$ is 15 minutes. The number
of impression opportunities between time steps $t$ and $t+1$ fluctuates
from 100 to 2,500.  
The {minimum and maximum} budgets of advertisers are 50 Yuan and 10,000 Yuan, respectively. 
The upper bound of the bid price is 25 Yuan, and the values of impressions {are positive}. 

\textbf{Hardware Resource.}
The training process in the real-world experiments is conducted using 10 NVIDIA T4 Tensor Core GPUs in a distributed manner. For each distributional worker, we use 10 CPUs and 200 GB of memory.

\begin{figure}
    \begin{center}
  \includegraphics[width=0.5
\textwidth]{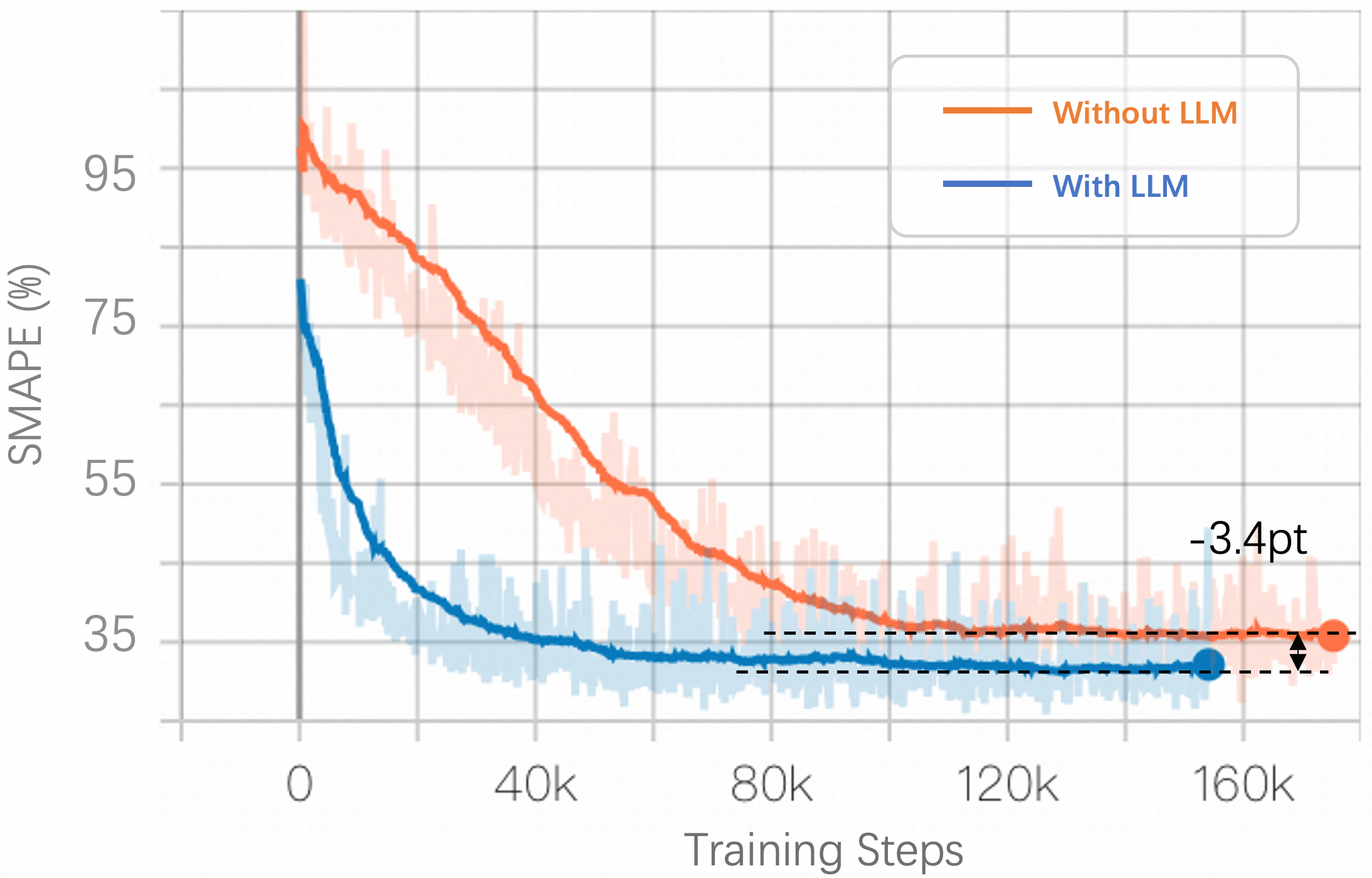}
  \end{center}
  \caption{The SMAPE training curves of the trajectory evaluator with and without LLM Embeddings.
  Incorporating LLM embeddings helps to achieve faster convergence and improved absolute accuracy.}
  \label{fig:ablation_llm}
\end{figure}

\subsection{Real-world Experiments on TargetROAS Bidding Problem}
\label{app:exp_targetroas}
In addition to the budget-constrained auto-bidding problem, we also apply the proposed AIGB-Pearl algorithm to a more challenging type of auto-bidding problem, named TargetROAS, with an extra ROI (Return on Investment) constraint. We evaluate our method in a real-world experiment on TaoBao involving 300k advertisers over 22 days. The offline dataset comprises 16 million trajectories of 800k advertisers.
The results are given in Table \ref{table:roas_res}. AIGB-Pearl achieves a \textbf{+5.1\%} improvement in GMV compared to the SOTA auto-bidding method, DiffBid, demonstrating its effectiveness in managing more complex and realistic constraints.
\setlength{\tabcolsep}{3pt}
\setlength{\abovecaptionskip}{-0.1pt}
\begin{table*}[t]
    \centering
    \caption{Overall performance of TargetROAS in real-world A/B test, involving 300k advertisers over 22 days.}
    \begin{tabular}{c|cccc}
    \toprule
       Methods & GMV & BuyCnt & ROI & Cost  \\
    
    \midrule
       \textbf{DiffBid} & 779,642,891 & 11,519,082 & 4.68 & 166,544,918 \\
       \textbf{AIGB-Pearl (ours)} & {819,550,812} & {11,886,501} & {4.70} & 174,234,673 \\
    \midrule
       $\bm \Delta$ & \textbf{+5.1\%} & \textbf{+3.2\%} & +0.5\% & \textbf{+4.6\%} \\
       \bottomrule
    \end{tabular}
     \label{table:roas_res}
\end{table*}

\subsection{Pathological Trajectory Behavior Explanation}
\label{app:pathological_behaviors}
In industrial practice, stable and effective metrics have been developed to evaluate pathological behaviors. 
For the case of the budget-constrained auto-bidding problem with bidding cycles structured as 24-hour episodes (T = 96 time steps), the following three key metrics are commonly used to identify pathological behaviors: 

\begin{itemize}
    \item \textbf{Excessive budget consumption}: there exists a time step $t$ such that the cost between time step $t$ and $t+1$ exceeds $10\%$ of the budget $B$;
    \item \textbf{Forward- (or Backward-) trending pacing}: the cost between time step $1$ and $24$ (or between time step $T-24$ and $T$) exceeds $40\%$ (or $40\%$) of the budget  $B$;
    \item \textbf{under-utilization of available budgets}: the total cost over the bidding episode is lower than $90\%$ of the budget $B$.
\end{itemize}

\subsection{Training Stability}
\label{app:exp_stability}

We present additional comparisons between the training curves of the offline RL with bootstrapping and those of AIGB-Pearl in Fig. \ref{fig:stability_score} and Fig. \ref{fig:stability_cost_ratio} concerning:
\begin{itemize}
\item \textbf{Scores}: the main performance index of the considered auto-bidding problem;
    \item \textbf{Cost Ratio}: the rate between the cost and the budget. A larger cost rate indicates a better performance.
\end{itemize}
It can be observed that the offline RL method tends to exhibit significant instability throughout training, with high variance across different seeds.
In contrast, AIGB-Pearl achieves much smoother and more consistent learning progress, demonstrating the improved training stability.

\begin{figure}[t]
    \centering
  \includegraphics[width=0.48
\textwidth]{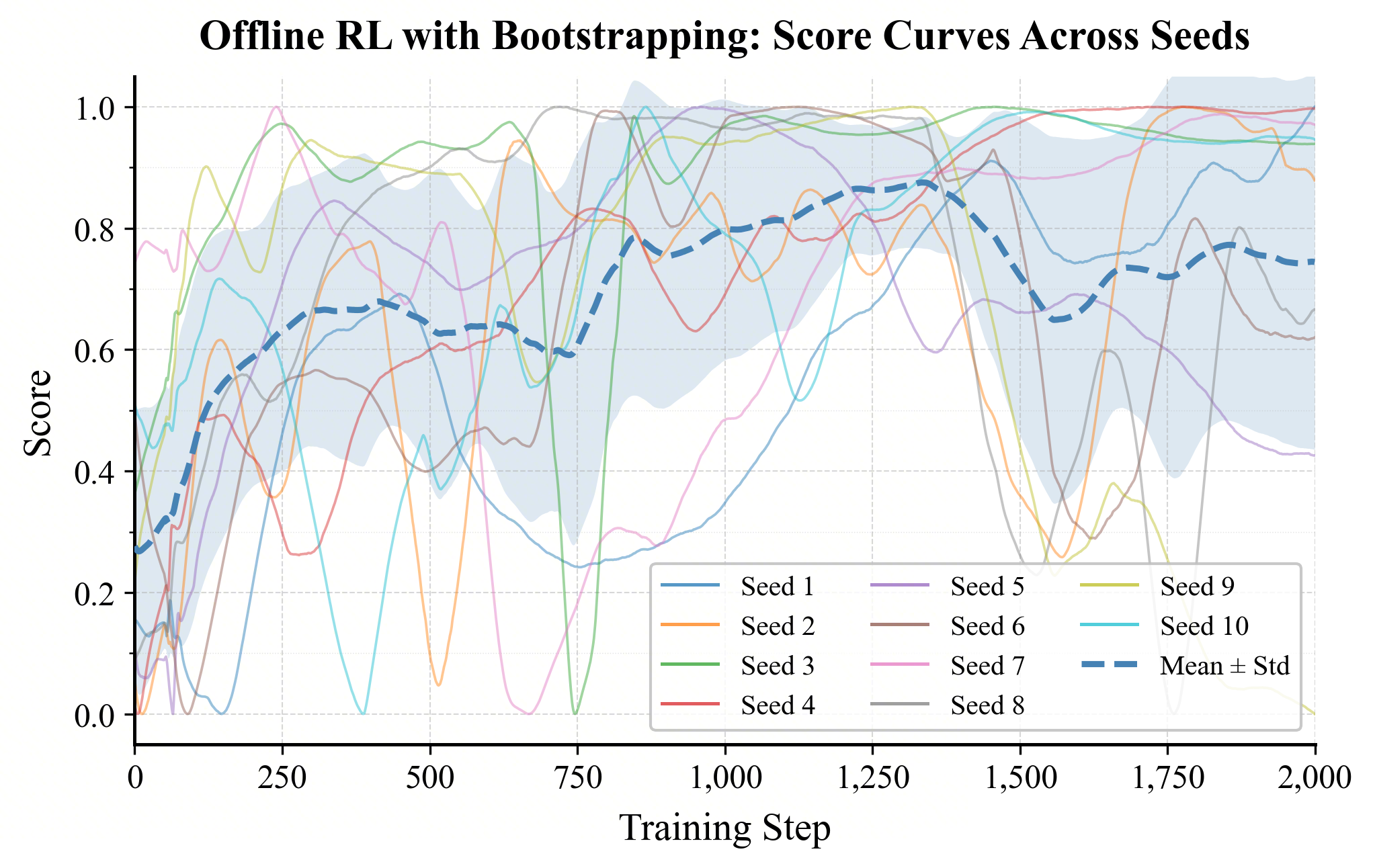}
  \hspace{0.05cm}
  \includegraphics[width=0.48
\textwidth]{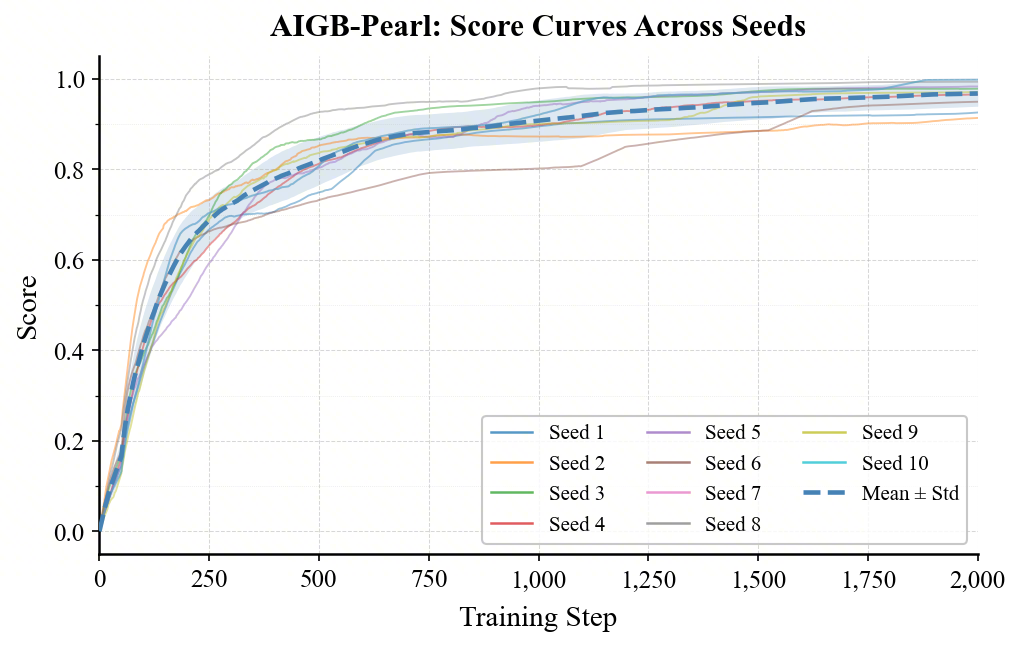}
  \caption{Learning curves of scores between offline RL with bootstrapping method and AIGB-Peral under 10 seeds.
  }
  \label{fig:stability_score}
\end{figure}

\begin{figure}[t]
    \centering
  \includegraphics[width=0.48
\textwidth]{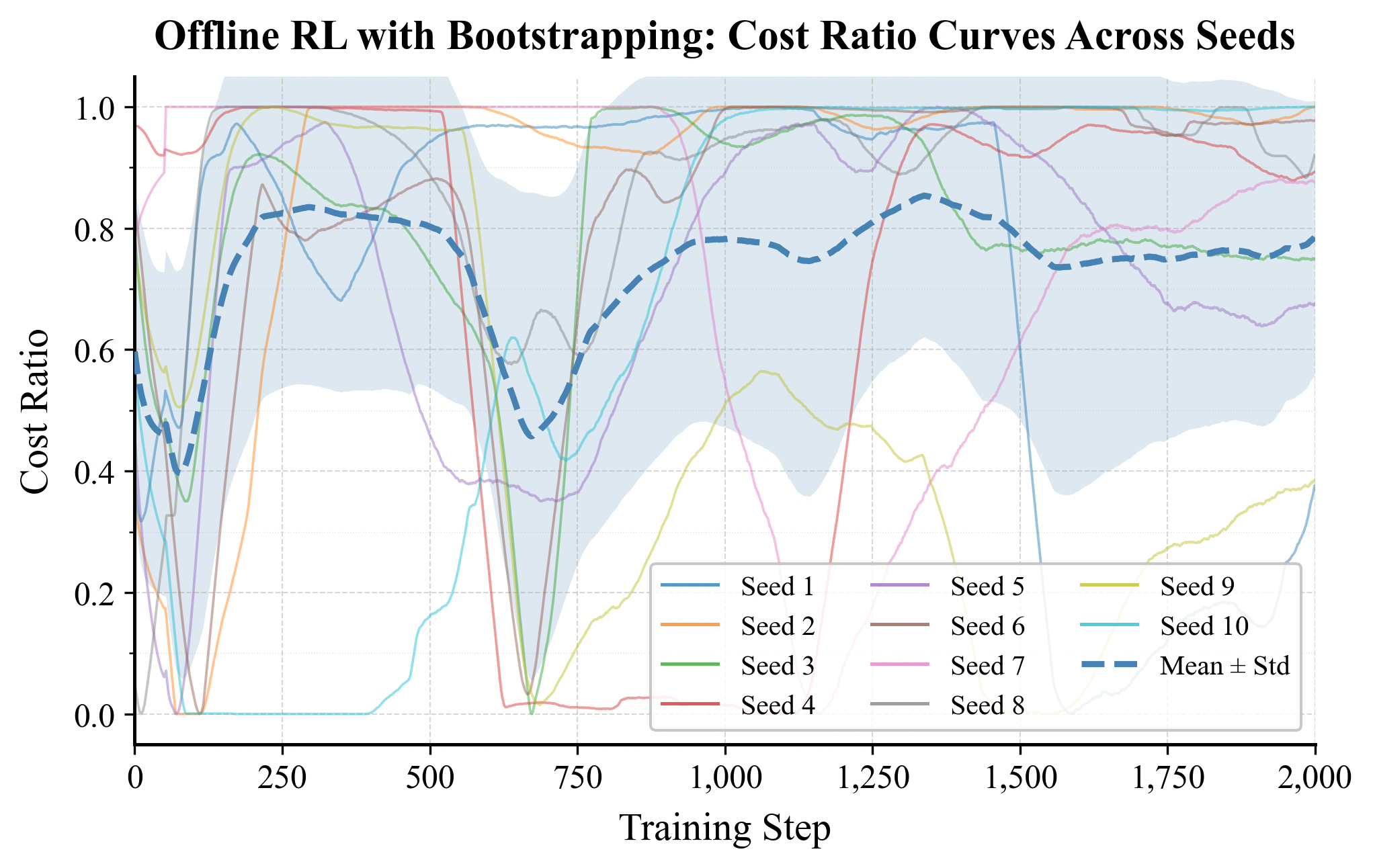}
  \hspace{0.05cm}
  \includegraphics[width=0.48
\textwidth]{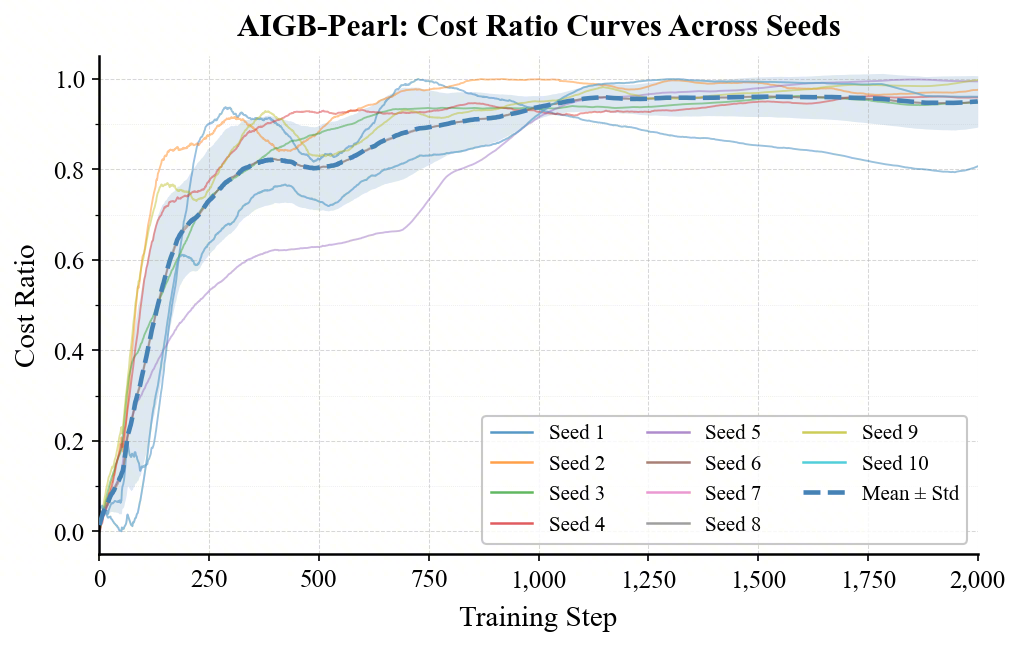}
  \caption{Learning curves of cost ratio between offline RL with bootstrapping method and AIGB-Peral under 10 seeds.
  }
  \label{fig:stability_cost_ratio}
\end{figure}





\subsection{Evaluator Accuracy}
\label{app:evaluator_accracy}
\textbf{Accuracy Metric.}
We evaluate the accuracy of the  evaluator along two dimensions, including the \emph{absolute accuracy}, reflecting how close the predicted scores are to ground truth qualities, and the \emph{order accuracy}, reflecting the correctness of relative rankings between trajectory pairs. Specifically, we use the \emph{symmetric mean absolute percentage error}, \textbf{SMAPE}, as the metric for the absolute accuracy and the \textbf{AUC}, defined as the ratio of correctly predicted ordinal pairs to the total number of pairs, as the metric for the order accuracy. 
The SMAPE ranges from $0\%$ to $200\%$, and the AUC ranges from $0\%$ to $100\%$.
Lower SMAPE and larger AUC indicate better evaluator accuracy. 

In the following, we investigate the effectiveness of using LLM embeddings and a pairwise loss for evaluator learning. 

\textbf{LLM Embedding Effectiveness.}
We examine the accuracy of the trajectory evaluator without using LLM embeddings, and the results are reported in the lines of ``w/o LLM'' in Table \ref{tab:ablation_SMAPE} and Table \ref{tab:ablation_AUC}.
It can be observed that LLM embeddings can improve both absolute and order accuracy.
Fig. \ref{fig:ablation_llm} compares the training progress with and without LLM embeddings in terms of the SMAPE metric.
As illustrated, the evaluator incorporating LLM embeddings converges faster and achieves a lower SMAPE than the one without LLM embeddings.
The performance gain stems from LLM embeddings' ability to encode high-level semantic information, thereby facilitating a more nuanced understanding of dependencies among sequential bidding states.

\textbf{Hybrid Point-wise and Pair-wise Losses Effectiveness.}
Table \ref{tab:ablation_SMAPE} and Table \ref{tab:ablation_AUC} present the SMAPE and AUC of the trajectory evaluator when trained with point-wise loss only, pair-wise loss only, and a combination of both. It can also be seen that using only pairwise loss yields significantly worse SMAPE performance, despite some improvement in AUC.
This suggests that while pairwise loss can enhance ranking consistency, it falls short of providing accurate absolute-value predictions.
When both point-wise and pair-wise losses are used together, the evaluator achieves lower SMAPE and higher AUC. 
This indicates that combining these two loss types not only improves absolute accuracy but also enhances order accuracy in trajectory evaluation.

\begin{table}
    \centering
    \setlength{\tabcolsep}{5.3pt}
    \begin{minipage}{0.47\textwidth}
       \caption{SMAPE results from ablation experiments on the trajectory evaluator.
    }
    \centering
    \footnotesize
    \begin{tabular}{c|ccc}
    \toprule
        SMAPE $\downarrow$ & Point-wise & Pair-wise  & Both \\
    \midrule
        w/o LLM & 43.55\% & 196.38\% & 35.00\% \\
        with LLM & 38.20\% & 196.00\% & \textbf{31.60}\%\\
    \bottomrule
    \end{tabular}
    \label{tab:ablation_SMAPE} 
    \end{minipage}
    \hspace{0.15in}
    \setlength{\tabcolsep}{5.3pt}
    \begin{minipage}{0.47\textwidth}
        \caption{AUC results from ablation experiments on the trajectory evaluator.
    }
    \centering
    \footnotesize
    \begin{tabular}{c|ccc}
    \toprule
        AUC $\uparrow$ & Point-wise & Pair-wise  & Both \\
    \midrule
        w/o LLM & 61.57\% & 64.91\% & 70.91\% \\
        with LLM & 65.00\% & 71.30\% & \textbf{75.10}\%\\
    \bottomrule
    \end{tabular}
    \label{tab:ablation_AUC}
    \end{minipage}
    \vspace{-3mm}
\end{table}

\subsection{Empirical performance with general offline data distributions}
Note that in many real-world auto-bidding systems, including the one considered in the paper, due to operational safety constraints, the online-deployed bidding policy is typically a single fixed model, and the offline dataset is collected exclusively from this single policy over multiple days, where an advertiser contributes a single trajectory per day. For example, in the considered auto-bidding system, the online-deployed baseline policy is a conditional generative model that, given a given advertiser and identical conditions, generates identical trajectory plans each day. The variation across different trajectories of the same advertiser in the offline dataset is solely due to stochastic environmental factors (e.g., traffic fluctuations). Since these exogenous perturbations are typically the sum of many independent impression-level sources of noise, the resulting trajectory deviations can be reasonably approximated as a Gaussian distribution.

To demonstrate the broad applicability of the proposed algorithm,  we evaluate its performance in settings where multiple policies are used for data collection. Specifically, we collect trajectories in the simulated environment using nine distinct bidding policies, thereby constructing an offline dataset with a multi-modal distribution that violates the Gaussian distribution. The empirical results are presented in Table \ref{table:LSI_not_hold}.

\setlength{\tabcolsep}{5pt}
\begin{table*}[h]
    \centering
    \small
    \caption{Empirical performance with multiple data-collection policies.}
    \begin{tabular}{c|ccc}
    \toprule
       Methods & GMV & ROI & Cost  \\
    
    \midrule
       \textbf{DiffBid} & 548.5 & 5.00 & 109.4 \\
       \textbf{AIGB-Pearl (ours)} & {575.7} & {5.04} & {114.6} \\
    \midrule
       $\bm \Delta$ & \textbf{+4.9\%} & \textbf{+0.7\%} & \textbf{+4.2\%}  \\
       \bottomrule
    \end{tabular}
     \label{table:LSI_not_hold}
\end{table*}

It can be observed that AIGB-Pearl still outperforms AIGB by 4.9\%, indicating that its performance is robust to the offline dataset's specific distribution.

\subsection{Hyper-parameter Tuning}
\label{app:hyper_parameter_sensitivity}
We conduct sensitivity experiments with respect to the hyper-parameter $\delta_k$.
For hyper-parameter $L_p$, we leverage the lower bound given in Eq. \ref{equ:lower_bound_Lp} in the main experiments. For the hyper-parameter $\epsilon$, we use the same empirical value of $5\%$ as in AIGB in the main experiments, which is typically set based on operational requirements. 

Specifically, in the planner loss given in Eq. \ref{equ:planner_loss}, $\beta_2$ is the penalty factor corresponding to the KL constraint. Actually, we control the KL divergence $\delta_k$ by tuning $\beta_2$. Table. \ref{table:hyper_parameter_KL} gives the hyper-parameter tuning results. It can be observed that, as long as $\delta_k$ is neither too large (in which case the algorithm degenerates into AIGB with a pure demonstration likelihood maximization term) nor too small (e.g., $\beta_2=0$, which completely removes the demonstration likelihood maximization), the proposed method consistently outperforms the original AIGB.
This also validates the discussion on performance bounds in Appendix \ref{app:theory_discussion}, which demonstrates that a moderate $\delta_k$ balances the lower and upper bounds, thereby yielding optimal performance.

\setlength{\tabcolsep}{5pt}
\begin{table*}[h]
    \centering
    \small
    \caption{Hyper-parameter tuning with respect to KL constraint $\delta_k$.}
    \begin{tabular}{c|cccccccc}
    \toprule
       $\beta_2$ Setting & 0 (no KL)&0.1&0.5&1.0&2&5&10& $\infty$ (AIGB, only KL) \\
    \midrule
       $\delta_k$ values & 0.85 & 0.51 & 0.50 &0.49&0.43&0.39&0.36&0.32\\
       \midrule
       GMV & {548.5} & {563.8} & {571.3}&\textbf{574.2}&573.9&570.7&567.6&556.3 \\
       \bottomrule
    \end{tabular}
     \label{table:hyper_parameter_KL}
\end{table*}

\section{Extending AIGB-Pearl to first-price auctions}
\label{app:extension_to_first_price}
We note that the proposed method remains effective in first-price auctions with a proper adaptation. Specifically, unlike second-price auctions where the optimal bid for impression $i$ takes the form $\text{bid}_i=av_i$, in first-price auctions, the optimal bid for impression $i$ is given by $\text{bid}_i=\min (av_i, p_i)$, which typically involves an extra bid shading method to predict the winning price \citep{gligorijevic2020bid,wu2015predicting}. Equipped with an off-the-shelf bid shading method (whose design is beyond the scope of this work), the auto-bidding problem in a first-price auction remains an offline sequential decision problem, i.e., making decisions over an $a$-sequence, to which the proposed method applies directly.

To validate the effectiveness of the proposed method under first-price auctions, we additionally conduct a simulated first-price auction experiment against the state-of-the-art AIGB method, where all methods are equipped with the same bid shading method. The results are presented in Table \ref{table:first_price}, demonstrating the effectiveness of our proposed method.

\begin{table*}[t]
    \centering
    \small
    \caption{Empirical performance in the simulated experiments with a first-price auction.}
    \begin{tabular}{c|ccc}
    \toprule
       Methods & GMV &  ROI & Cost  \\
    
    \midrule
       \textbf{DiffBid} & 1,546  & 5.13& 301 \\
       \textbf{AIGB-Pearl (ours)} & {1,611}  & {5.18} & 311\\
    \midrule
       $\bm \Delta$ & \textbf{+4.2\%} & \textbf{+1.0\%} & \textbf{+3.3\%} \\
       \bottomrule
    \end{tabular}
     \label{table:first_price}
\end{table*}

\section{Extending AIGB-Pearl to Online Setting Discussion}
We note that AIGB-Pearl can be extended to online settings when equipped with a safe online exploration policy. Specifically, due to safety constraints, the auto-bidding policy during training cannot interact directly with the live advertising system; only safe exploration policies are permitted to collect data online \citep{sorl}. Consequently, an online auto-bidding framework typically involves two parts:
\begin{itemize}
    \item a safe online exploration policy, which is a well-established component in existing work \citep{sorl} and beyond the scope of this paper;
    \item an offline policy training method that leverages the data collected online.
\end{itemize}
AIGB-Pearl can be directly applied as the offline policy training method within the online framework without modification. In practice, due to the safety and stability concerns, many industrial auto-bidding systems adopt an offline optimization paradigm. For this practical reason, we focus on the offline setting in this work.

\section{LLM Usage}

The authors have used Large Language Models (LLMs) exclusively for grammar checking and lexical refinement during the writing process. No LLM-generated content, data analysis, or substantive contributions to the research methodology, results, or conclusions are involved in this work.

\subsubsection*{Acknowledgments}
This work was supported by Alibaba Group through the Alibaba Innovative Research Program.

\end{document}